\documentclass{article}

    \PassOptionsToPackage{numbers, compress}{natbib}

    \usepackage[final]{neurips_2024}

\usepackage[utf8]{inputenc} %
\usepackage[T1]{fontenc}    %
\usepackage[pagebackref,hidelinks]{hyperref}
\usepackage{url}            %
\usepackage{booktabs}       %
\usepackage{amsfonts}       %
\usepackage{nicefrac}       %
\usepackage{microtype}      %
\usepackage{xcolor}         %

\usepackage{multirow}

\usepackage{algorithm}
\usepackage{algorithmic}
\usepackage{enumitem}
\usepackage{subcaption}
\usepackage{wrapfig}

\usepackage[group-separator={,},group-minimum-digits={3}]{siunitx}

\usepackage{amsmath}
\usepackage{amssymb}
\usepackage{mathtools}
\usepackage{amsthm}

\usepackage[capitalize]{cleveref}
\crefname{section}{Sec.}{Secs.}
\Crefname{section}{Section}{Sections}
\Crefname{table}{Table}{Tables}
\crefname{table}{Tab.}{Tabs.}

\theoremstyle{plain}
\newtheorem{theorem}{Theorem}[section]

\theoremstyle{definition}

\theoremstyle{remark}
\newtheorem{remark}[theorem]{Remark}

\title{{O}n {S}ampling {S}trategies for {S}pectral {M}odel {S}harding}

\author{%
  Denis Korzhenkov\\
  Qualcomm AI Research\thanks{Qualcomm AI Research, Qualcomm Technologies Netherlands B.V (Qualcomm AI Research is an initiative of Qualcomm Technologies, Inc.).} \\
  Amsterdam, The Netherlands \\
  \texttt{dkorzhen@qti.qualcomm.com} \\
  \And
  Christos Louizos \\
  Qualcomm AI Research$^*$ \\
  Amsterdam, The Netherlands \\
  \texttt{clouizos@qti.qualcomm.com} \\
}

\begin{document}

\maketitle

\begin{abstract}
The problem of heterogeneous clients in federated learning has recently drawn a lot of attention.
Spectral model sharding, i.e., partitioning the model parameters into low-rank matrices based on the singular value decomposition, has been one of the proposed solutions for more efficient on-device training in such settings.
In this work, we present two sampling strategies for such sharding, obtained as solutions to specific optimization problems.
The first produces unbiased estimators of the original weights, while the second aims to minimize the squared approximation error.
We discuss how both of these estimators can be incorporated in the federated learning loop and practical considerations that arise during local training. 
Empirically, we demonstrate that both of these methods can lead to improved performance on various commonly used datasets. 

\end{abstract}

\newcommand{\Iindicator}{\mathbb{I}}
\newcommand{\Isubsample}{\mathcal{I}}
\newcommand{\Eexpect}{\mathbb{E}}
\newcommand{\topic}[2][0.5mm]{\vspace{#1}\noindent\textbf{#2}}

\newcommand\lft{\mathopen{}\left}
\newcommand\rgt{\aftergroup\mathclose\aftergroup{\aftergroup}\right}

\section{Introduction}
\label{sec:introduction}

Due to increasing concerns about user data privacy, a federated approach becomes a viable alternative for common centralized methods of machine learning~\citep{9599369}.
Unlike centralized training on powerful servers, the federated setting faces a variety of user devices like personal computers, mobile phones, tablets, etc. which may have different constraints in terms of memory footprint, spare CPU resources, battery levels, and many others~\citep{10.1145/3596907}.
Therefore, in practice, the scenario where each user trains the same model may be impossible.

To tackle this problem, several methods which carefully create smaller sub-models from a larger global model were proposed~\citep{diao_heterofl_2022,horvath_fjord_2022,alam2022fedrolex,yao_fedhm_2022}.
The general recipe for each communication round is composed of three steps.
Initially, the server receives from each client the information about its capabilities and sends a sub-model of the corresponding size to that client.
Afterwards, each of the clients update their respective sub-models for a specific number of steps.
Finally, the server collects the fine-tuned models from each client and aggregates them into a single global model.

In order to create sub-models, the recently proposed method called PriSM suggests to conduct singular value decomposition (SVD) of the weight matrices of affine layers and randomly sample the terms from this decomposition~\citep{niu_federated_2022,niu_overcoming_2023}.
The number of sampled terms is a  hyperparameter which depend on the capacity of the client.
Despite the fact that PriSM has shown promising results on several datasets, its sampling design remains heuristic. 

In this work, we show that by introducing certain first principles, better strategies can be obtained via solutions to specific optimization problems. 
More specifically, our aim is to obtain the inclusion probabilities for the SVD factors in a way that minimizes the expected, under these probabilities, squared approximation error.
In isolation, this requirement leads to the deterministic top-$n$ sampling, as the Eckart–Young–Mirsky theorem implies~\citep{Eckart_Young_1936}.
However, we propose two additional principles which lead to two novel strategies, respectively.
The first is inspired by the commonly used \emph{inverted} implementation of dropout regularization~\citep{JMLR:v15:srivastava14a,CS231n}, and searches for the solution in the family of unbiased estimators.
The second one takes into account all the (participating in a given round) client specific sub-models, and aims to reconstruct the full weight matrix from all these observations (instead of considering each client separately).
Notably, the solutions for both strategies can be presented in closed form. Furthermore, since the sampling of sub-models takes place on the server side, the computation of the optimal probabilities in these settings does not require any additional calculation on potentially weak clients.
Finally, we  employ such procedures for sampling without replacement which preserve given inclusion probabilities~\citep{tille_sampling_2006}.

\section{Related Works}
\label{sec_related_works}

Heterogeneity in its different aspects is ubiquitous in real-life setups for federated learning and is therefore a long-standing topic of research~\citep{gao2022survey,10.1145/3625558}.
Probably the most well-studied issue is the non-i.i.d. distributed data samples, available on clients' devices~\citep{hsu_nonidentical_2019}.
However, the deployment of federated systems also faces the variety of devices which often have different computational constraints~\citep{10.1145/3596907}.
This makes training the same model for each client infeasible and motivates the search for new solutions.

Many popular approaches to tackle this kind of heterogeneity follow one of two directions.
Methods based on knowledge distillation in most cases use client models to train a new predictor on the server side~\citep{NEURIPS2020_18df51b9,10.5555/3495724.3496904}.
However, these techniques typically require a separate dataset on the server, which may be a serious limitation in practice.
In contrast, partial training-based approaches do all the training solely on the client side.
To fit the clients' constraints, they sample sub-models which have a size that allows  them to be trained on the corresponding device. After local training, these sub-models are aggregated into a global model.
Notably, this allows for a final model size that is otherwise infeasible to train in any of the available clients.

Many such methods conduct the sub-sampling in the original model space, e.g., by selecting specific output dimensions on each layer.
While HeteroFL~\citep{diao_heterofl_2022} simply selected the top-left corner of the weight matrix of the required size, methods such as Federated Dropout~\citep{caldas2019expanding} and FjORD~\citep{horvath_fjord_2022} argued for randomized selection.
FedRolex~\citep{alam2022fedrolex} replaced randomness with rolling windows, with the aim of covering all possible sub-models in a shorter period of time.

Recent advances in using factorized low-rank models, both for training from scratch~\citep{khodak_initialization_2022,kamalakara_exploring_2022,sui2023elrt,wang_cuttlefish_2023} and for fine-tuning~\citep{hu2022lora}, found applications in the realm of federated learning~\citep{babakniya2023slora,zhang-etal-2023-fedpetuning}. 
E.g., FLANC used this idea to address heterogeneous computational constraints by representing a weight matrix as a product of two factors, one fixed for all clients and the other having a varying size based on the capacity of each client~\citep{mei2022resourceadaptive}.
Such approaches for model sub-sampling can be generally useful, given the fact that the  (stable) rank of the model weights tends, in some cases, to decrease during training~\citep{pmlr-v201-timor23a,wang_cuttlefish_2023}.
In a similar spirit, FedHM~\citep{yao_fedhm_2022} applied truncated SVD to create the sub-models, while PriSM~\citep{niu_federated_2022,niu_overcoming_2023} employed a random procedure for choosing the SVD factors.
In our work, we propose novel strategies for the selection of sub-models and their training in such scenarios and demonstrate their advantages.

\section{Method}
\label{sec:method}

\subsection{Preliminaries}
\label{subsec:preliminaries}

In the case of cross-device federated learning, low-end clients may not be able to perform the computations with the full weight matrix $W$ in all the layers due to various constraints.
A possible remedy for this problem can be to use an approximation for $W$.
In this work we consider a particular case of such approximations, namely sub-sampling the terms from the singular decomposition (SVD) $W = \sum_{i=1}^N \lambda_i u_i v_i^T$, where both sets of column vectors $\left\{ u_i \in \mathbb{R}^{c_{out}} \right\}_i$ and  $\left\{ v_i \in \mathbb{R}^{c_{in}} \right\}_i$ are orthonormal, and the singular values are sorted in a non-increasing order $\lambda_1 \geq \lambda_2 \geq \dots \geq \lambda_N > 0$.
Due to the usage of singular decomposition, we name this type of sub-model sampling as \emph{spectral model sharding}.

More formally, let $0 < r \leq 1$ be the \emph{keep ratio} which depends on the client's capabilities, then the goal is to sample the $n = \left\lceil N r \right\rceil$ terms from the full sum.
Following prior works, keep ratio $r$ is shared across layers~\citep{niu_overcoming_2023,niu_federated_2022,mei2022resourceadaptive}.
With a sufficiently low value of $r$, this  reduces both communication costs and computational complexity.
We introduce a vector of indicators $z = \left(z_1,\dots,z_N\right) \in \left\{0,\,1\right\}^N$ such that $\sum_{i=1}^N z_i = n,$ and the corresponding \emph{sub-layer} is created with a weight matrix $\hat{W},$
\begin{equation}
\label{eq:general_estimator}
\hat{W} = \sum_{i=1}^N z_i \omega_i \lambda_i u_i v_i^T = \sum_{j=1}^n \omega_{\Isubsample_j} \lambda_{\Isubsample_j} u_{\Isubsample_j} v_{\Isubsample_j}^T, 
\end{equation}
where $\left\{\omega_i \in \mathbb{R}\right\}_i$ are some scalar \emph{auxiliary multipliers}, and $\Isubsample$ is the set of selected indices, $\Isubsample = \left\{i:\, z_i=1\right\}.$

If the sampling of indicators $z_i$ is random, we denote the corresponding marginal inclusion probabilities as $\pi_i = \Pr\lft(z_i = 1\rgt).$
The joint distribution of indicator variables is interchangeably referred to as $p\lft(z\rgt)$ or $p\lft(\Isubsample\rgt).$
Wherever the expectation symbol $\Eexpect$ is used in the text, expectation w.r.t. $p\lft(z\rgt)$ is assumed.
If it is necessary to emphasize that the particular estimator, multiplier, vector, etc. is used by the client $c$, this is shown with the client index in parentheses, e.g. $\hat{W}^{\left(c\right)}$.

Based on the previous models of FedHM~\citep{yao_fedhm_2022} and PriSM~\citep{niu_federated_2022,niu_overcoming_2023}, we consider the following pipeline of sharded model training in the case of federated learning on heterogeneous devices:
\begin{enumerate}[leftmargin=*,nosep]
    \item In the beginning of each communication round, the server performs SVD for the weight matrices of the affine layers. Then, for each client $c$ it randomly samples a subset of indices $\Isubsample^{\left(c\right)}$ from the decomposition with keep ratio $r^{\left(c\right)}$, 
    $\hat{W}^{\left(c\right)} = \sum_{j=1}^{n^{\left(c\right)}}  \omega_{\Isubsample_j^{\left(c\right)}} u'_{\Isubsample_j^{\left(c\right)}} v_{\Isubsample_j^{\left(c\right)}}'^{T}$. The singular values are absorbed into the vectors, namely $u'_i = \sqrt{\lambda_i} u_i, \, v'_i = \sqrt{\lambda_i} v_i$.
    Note that in both FedHM and PriSM, the auxiliary multipliers $\omega_i$ (which can be used to `compensate' for the dropped terms) were set to 1.
    However, since they are actively used in our method, we include them in the description.
    \item Each participating client $c$ receives the matrices 
    $U^{\left(c\right)} = \left(u'_{\Isubsample_1^{\left(c\right)}}, \dots,  u'_{\Isubsample_n^{\left(c\right)}} \right) \in \mathbb{R}^{c_{out} \times n}$ 
    and 
    $V^{\left(c\right)} = \left(v'_{\Isubsample_1^{\left(c\right)}}, \dots,  v'_{\Isubsample_n^{\left(c\right)}} \right) \in \mathbb{R}^{c_{in} \times n}$ 
    as well as the vector $\omega^{\left(c\right)} = \left(\omega_{\Isubsample_1^{\left(c\right)}}, \dots,  \omega_{\Isubsample_n^{\left(c\right)}} \right) \in \mathbb{R}^{n}$
    from the server and performs local training with the factorized weight matrix $U^{\left(c\right)} \Omega^{\left(c\right)} V^{\left(c\right)T}$ for some predefined number of epochs. 
    During training $U^{\left(c\right)}$ and $V^{\left(c\right)}$ are updated while the diagonal matrix $\Omega^{\left(c\right)}$ which has values of $\omega^{\left(c\right)}$ on the diagonal is kept frozen.
    When the local training is done, the updated matrices $U^{\left(c\right)}$ and $V^{\left(c\right)}$ are sent back to the server.
    \item The server aggregates each vector from the decomposition separately, i.e.
    \begin{math}
        u'_i \leftarrow \frac{\sum_{c:\,z_i^{\left(c\right)} = 1} D^{\left(c\right)} u_i'^{\left(c\right)}}{\sum_{c:\,z_i^{\left(c\right)} = 1} D^{\left(c\right)}} , 
    \end{math}
    where $D^{\left(c\right)}$ is the size of the local dataset of the client $c$ and $u_i'^{\left(c\right)}$ is the updated value of the $i$-th vector received from the client $c$.
    If the $i$-th term was not selected by any client in the current round, it remains the same.
    Vectors $\left\{v'_i\right\}_i$ are aggregated in the same way.
    Afterward, the updated full weight matrix is calculated as $W \leftarrow \sum_{i=1}^N u'_i v_i'^T$, and a new communication round begins.
\end{enumerate}
FedHM proposed to use the $n$ terms corresponding to the largest singular values, a deterministic selection instead of random sampling.
In contrast, PriSM presented a randomized algorithm, which samples the terms according to the magnitudes of the singular values. It employs the NumPy~\citep{harris2020array} method $\texttt{numpy.random.choice}$ with the option $\texttt{replace=False}$, and the associated probabilities are set proportional to $\lambda_i^k$, where $k$ is a hyperparameter depending on the keep ratio $r$.
Sampling in this manner results in a probability law similar to Wallenius' noncentral hypergeometric distribution~\citep{Wallenius_1963,2e0c8582-b4c5-3cc1-897d-1ca8cebe1269,fog_sampling_2008,fog_calculation_2008}, see~\cref{sec:wallenius} for details. 
Such sampling procedures make the analysis of the resulting marginal inclusion probabilities $\{\pi_i\}_{i}$ complicated~\citep{tille_remarks_2023}.
Thus, the statistical properties of the obtained matrix approximation are opaque.

\subsection{Sampling Distribution}
\label{subsec:inclusion_proba}
As opposed to the prior works, we  consider the search of an optimal sampling distribution to be the central task.
We introduce two assumptions which we use to derive our two different strategies.

\subsubsection{Unbiased Estimator}
\label{sec:unbiased_estimator}
The bias and variance are inherent trade-offs for an estimator. 
To keep the bias in check, we propose an assumption of the unbiasedness of the sampled estimator $\hat{W}$ of the original weight matrix $W$, i.e. $\mathbb{E} \hat{W} = W$, where the expectation is taken over the sampling distribution $p\lft(\Isubsample\rgt)$ with the marginal inclusion probabilities $\left(\pi_1,\dots,\pi_N\right).$
As shown in~\cref{sec:derivation_unbiased_estimator}, this requirement necessarily leads to the following values of auxiliary multipliers $\omega_i = \pi_i^{-1}.$
This selection of multipliers is known in literature as a Horvitz-Thompson estimator~\citep{Horvitz_Thompson_1952} and resembles the common implementation of \emph{inverted dropout}~\citep{JMLR:v15:srivastava14a,CS231n} which assumes dividing the non-zeroed activations by the inclusion probability at training time. %

With the bias of the estimator taken under control, we aim to reduce the approximation error of the resulting layer. 
For any input vector $x \in \mathbb{R}^{c_{in}},$ the following inequality holds 
\begin{equation}
\label{eq:operator_and_frobenius_norm}
    \left\lVert Wx - \hat{W}x \right\rVert_2^2 \leq \left\lVert W - \hat{W} \right\rVert_2^2 \cdot \left\lVert x \right\rVert_2^2 \leq \left\lVert W - \hat{W} \right\rVert_F^2 \cdot \left\lVert x \right\rVert_2^2,
\end{equation}
where $\left\lVert \cdot \right\rVert_F^2$ is a squared Frobenius norm which equals the sum of the squared singular values.
Thus, we can control the upper-bound of the expected error by searching for sampling distribution $p\lft(\Isubsample\rgt)$ that minimizes the \emph{Frobenius discrepancy}
\begin{equation}
\label{eq:general_frob_error}
\resizebox{0.25\textwidth}{!}{$
\min\limits_{p\left(\Isubsample\right)} \quad
\Eexpect_{p\left(\Isubsample\right)} \left\lVert W - \hat{W} \right\rVert_F^2 .
$}
\end{equation}

\begin{theorem}
\label{th:unbiased_estimator}
   For an unbiased estimator $\hat{W}$ of the type specified in~\cref{eq:general_estimator} and consisting of $n$ terms, the Frobenius discrepancy can be expressed in terms of the marginal inclusion probabilities
   \begin{equation}
   \label{eq:unbiased_frob_error}
       \Eexpect \left\lVert W - \hat{W} \right\rVert_F^2  
       = \sum_{i=1}^N \lambda_i^2 \left( \pi_i^{-1} -1 \right),
   \end{equation}
    and the optimal set of inclusion probabilities has the following form
    \begin{equation}
    \label{eq:unbiased_pi_value}
    \pi_i = \begin{cases}
        1, &\textrm{if} \;i \leq t,\\
     \frac{\left(n - t\right)\lambda_i}{\sum_{k=t+1}^N \lambda_k}, &\textrm{if} \;i > t
    \end{cases}
    \end{equation}
    for $i=1,\dots,N,$ where $t \in \left\{0,\dots,n-1\right\}$.
\end{theorem}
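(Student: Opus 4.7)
The plan is to first express the Frobenius discrepancy in closed form by exploiting orthonormality of the SVD factors, and then solve the resulting separable convex program over the marginal inclusion probabilities. For the closed-form expression, I would substitute the Horvitz--Thompson multipliers $\omega_i = \pi_i^{-1}$ (which, per the unbiasedness derivation of \cref{sec:derivation_unbiased_estimator}, are forced by $\Eexpect \hat{W} = W$) into \cref{eq:general_estimator}, giving the residual $W - \hat{W} = \sum_{i=1}^N (1 - z_i/\pi_i)\, \lambda_i u_i v_i^T$. Since $\{u_i\}$ and $\{v_i\}$ are orthonormal, expanding $\|W - \hat{W}\|_F^2 = \mathrm{tr}((W - \hat{W})(W - \hat{W})^T)$ eliminates all cross terms and leaves $\sum_{i=1}^N \lambda_i^2 (1 - z_i/\pi_i)^2$. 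Taking expectations, using $z_i^2 = z_i$ and $\Eexpect z_i = \pi_i$, each summand collapses to $\lambda_i^2(\pi_i^{-1} - 1)$, which establishes \cref{eq:unbiased_frob_error}.

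For the optimization, I would minimize $\sum_i \lambda_i^2 (\pi_i^{-1} - 1)$ over $\pi \in (0, 1]^N$ subject to $\sum_i \pi_i = n$. The equality constraint is forced because $\sum_i z_i = n$ holds almost surely under any admissible sampling law, so its expectation equals $n$ as well. The objective is strictly convex and separable, hence the KKT conditions are necessary and sufficient. Interior stationarity gives $\pi_i \propto \lambda_i$ with a common Lagrange multiplier, while any index whose unconstrained value would exceed $1$ must be clamped to $\pi_i = 1$ by complementary slackness with the upper-bound constraints. Because $\lambda_1 \geq \cdots \geq \lambda_N$, the clamped set is necessarily an initial segment $\{1, \dots, t\}$, and redistributing the remaining budget $n - t$ over the unclamped indices proportionally to $\lambda_i$ recovers precisely \cref{eq:unbiased_pi_value}.

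The main technical step is pinning down the threshold $t$: it will be characterized as the smallest integer in $\{0, \dots, n-1\}$ for which $(n - t)\lambda_{t+1} / \sum_{k=t+1}^N \lambda_k \leq 1$, ensuring that the proportional allocation on the unclamped block is feasible. Existence of such a $t$ follows from a monotonicity argument using $\lambda_t \geq \lambda_{t+1}$, together with the observation that at $t = n - 1$ the expression is trivially equal to $1$; I expect this bookkeeping to be the most delicate part of the proof, but strict convexity of the objective then guarantees uniqueness of the minimizer.
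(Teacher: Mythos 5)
Your proposal is correct and follows essentially the same route as the paper: expand the Frobenius discrepancy using orthonormality of the rank-one terms $u_i v_i^T$, take expectations with $z_i^2 = z_i$, and then solve the separable convex program $\min \sum_i \lambda_i^2(\pi_i^{-1}-1)$ subject to $\sum_i \pi_i = n$ and $0 \le \pi_i \le 1$ via KKT, with the clamped set forced to be an initial segment by the ordering of the $\lambda_i$. One minor slip: at $t = n-1$ the ratio $(n-t)\lambda_{t+1}/\sum_{k=t+1}^N \lambda_k = \lambda_n/\sum_{k=n}^N \lambda_k$ is strictly less than $1$ when $n < N$, not ``trivially equal to $1$'' (the paper, rather than characterizing $t$ as your smallest feasible threshold, simply sweeps over all $t \in \{0,\dots,n-1\}$ and picks the one minimizing the discrepancy, which sidesteps this bookkeeping).
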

\begin{proof}
See~\cref{sec:derivation_unbiased_estimator}.
\end{proof}
As follows from~\cref{th:unbiased_estimator}, to find the true arguments of the minima of~\cref{eq:unbiased_frob_error}, one needs to sweep over $n$ possible values of $t$, evaluate the Frobenius discrepancy and select the $t$ with the minimal discrepancy.
Note that this procedure takes place on the server side and therefore does not require extra computation on clients' devices.

\subsubsection{Collective Estimator}
\label{sec:collective_estimator}
Due to the well-known bias-variance trade-off, unbiased estimators in practice can have too large variance.
This motivates us to consider another perspective.
Consider the simplifying assumption where $C$ clients participating in the current communication round share the same number of terms $n$ in their respective estimators. We can target `reconstructing' the full matrix $W$ from the whole set of i.i.d. observations $\left\{ \hat{W}^{\left(c\right)} \right\}_{1\leq c \leq C}.$
The simplest way of obtaining such reconstruction is averaging, i.e. $\bar{W} = \frac{1}{C} \sum_{c=1}^C \hat{W}^{\left(c\right)},$ and we refer to the matrix $\bar{W}$ as the \emph{collective estimator}.
To ensure that our reconstruction is accurate, we aim to find the optimal sampling distribution as well as the set of auxiliary multipliers,
\begin{equation}
\label{eq:collective_frob_error}
\min\limits_{p\lft(\Isubsample\rgt), \left\{\omega_i\right\}_i} \quad
\Eexpect_{p\lft(\Isubsample\rgt)} \left\lVert W - \bar{W} \right\rVert_F^2 .
\end{equation}

\begin{theorem}
\label{th:collective_estimator}
    For a collective estimator $\bar{W}$, the average value of the Frobenius discrepancy can be expressed in terms of just the marginal inclusion probabilities
    \begin{equation}
    \begin{aligned}
        \Eexpect \left\lVert W - \bar{W} \right\rVert_F^2 
        &= \sum_{i=1}^N \lambda_i^2 \omega_i \pi_i \left(-2 + \frac{\omega_i}{C} + \frac{\omega_i \pi_i \left(C-1\right)}{C} \right) 
        + \sum_{i=1}^N \lambda_i^2,
    \end{aligned}
    \end{equation}
    and the optimal set of inclusion probabilities and auxiliary weights has the following form in the case of $C > 1$
    \begin{align}
        \label{eq:collective_omega_pi_value}
        \omega_i = \frac{C}{1 + \pi_i \left(C-1\right)}, \qquad\qquad
        \pi_i  = \begin{cases}
            1, &\text{if} \; i \leq t, \\
             \frac{\left(n-t + \frac{u}{C-1} \right)\lambda_i}{\sum_{k=t+1}^{t+u} \lambda_k} - \frac{1}{C-1}, &\text{if} \; t < i \leq t+u, \\
            0, &\text{if} \; i > t+u, \\
        \end{cases}
    \end{align}
    for $i=1,\dots,N,$ where $t$ and $u$ are integer constants such that $0 \leq t \leq n$, and $0 \leq u \leq N-t$. 
    For $C=1$ the optimal values are $\pi_i = \omega_i = \Iindicator\lft(i \leq n \rgt).$
\end{theorem}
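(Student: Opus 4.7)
\emph{Plan.} I would prove the theorem in two stages: first derive the claimed closed form for $\Eexpect \lVert W - \bar W \rVert_F^2$, then jointly minimize it over $\{\omega_i\}$ and the sampling law $p(\Isubsample)$, optimizing $\omega$ first (analytically) and $\pi$ second (via KKT).

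\emph{Step 1 (expected discrepancy).} Introduce $\bar z_i := C^{-1}\sum_{c=1}^C z_i^{(c)}$, so that $W - \bar W = \sum_{i=1}^N (1 - \omega_i \bar z_i)\lambda_i u_i v_i^T$. Since $\{u_i\}$ and $\{v_i\}$ are each orthonormal, the cross terms in $\lVert\cdot\rVert_F^2$ vanish, leaving $\lVert W - \bar W \rVert_F^2 = \sum_i \lambda_i^2 (1 - \omega_i \bar z_i)^2$. Independence across clients and $(z_i^{(c)})^2 = z_i^{(c)}$ give $\Eexpect \bar z_i = \pi_i$ and $\Eexpect \bar z_i^2 = (\pi_i + (C-1)\pi_i^2)/C$; expanding the square and collecting terms produces the stated expression for $\Eexpect \lVert W - \bar W \rVert_F^2$.

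\emph{Step 2 (optimal $\omega$).} For each $i$ with $\pi_i > 0$, the $i$-th summand is a strictly convex quadratic in $\omega_i$. Setting $\partial/\partial \omega_i = 0$ yields $\omega_i = C/(1 + (C-1)\pi_i)$, at which point the bracketed factor $-2 + \omega_i/C + \omega_i(C-1)\pi_i/C$ collapses to $-1$. The problem then reduces to maximizing $\Phi(\pi) := \sum_i \lambda_i^2 \cdot C\pi_i / (1 + (C-1)\pi_i)$ over the polytope $\{\pi : 0 \le \pi_i \le 1,\ \sum_i \pi_i = n\}$.

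\emph{Step 3 (optimal $\pi$ and main obstacle).} For $C > 1$ each summand of $\Phi$ is strictly concave (second derivative $-2C(C-1)\lambda_i^2/(1+(C-1)\pi_i)^3 \le 0$), so KKT is necessary and sufficient. With multiplier $\alpha$ for the equality and $\beta_i, \gamma_i \ge 0$ for the box constraints, stationarity reads $C\lambda_i^2/(1+(C-1)\pi_i)^2 = \alpha + \beta_i - \gamma_i$. On any interior face $\pi_i \in (0,1)$ this gives $\pi_i = (\sqrt{C/\alpha}\,\lambda_i - 1)/(C-1)$, an affine and increasing function of $\lambda_i$; combined with the sortedness $\lambda_1 \ge \cdots \ge \lambda_N$, this monotonicity forces the optimizer to consist of three consecutive blocks --- $\pi_i = 1$ for $i \le t$, interior for $t < i \le t+u$, and $\pi_i = 0$ for $i > t+u$. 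Substituting the sum constraint on the middle block pins down $\sqrt{C/\alpha} = ((n-t)(C-1) + u)/\sum_{k=t+1}^{t+u}\lambda_k$, which upon rearranging matches the stated closed form. The global optimum is then obtained by enumerating admissible integer pairs $(t, u)$ on the server side, analogous to the sweep used in \cref{th:unbiased_estimator}. The degenerate case $C = 1$ simplifies the discrepancy to $\sum_i \lambda_i^2[(1 - \pi_i) + \pi_i (1 - \omega_i)^2]$, which is minimized at $\omega_i = 1$ and, by Eckart--Young--Mirsky, $\pi_i = \Iindicator(i \le n)$. The main obstacle is the combinatorial part of Step 3: one must rule out interleaved active sets and argue that sortedness together with interior monotonicity forces a genuine prefix--interior--suffix decomposition; once this structure is established, determining the interior coefficients from the equality constraint is routine algebra.
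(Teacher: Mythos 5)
Your proposal is correct and follows the same overall strategy as the paper: the expectation computation in your Step 1 (via $\Eexpect \bar z_i = \pi_i$ and $\Eexpect \bar z_i^2 = (\pi_i + (C-1)\pi_i^2)/C$) is identical to the paper's expansion, and the endgame (prefix--interior--suffix block structure, the sum constraint pinning down the multiplier, enumeration over $(t,u)$) matches \cref{sec:derivation_collective_estimator}. The one genuine difference is in the middle: you eliminate $\omega_i$ analytically first, observing that the optimal $\omega_i = C/(1+(C-1)\pi_i)$ collapses the bracket to $-1$ and reduces the problem to maximizing the concave separable function $\sum_i \lambda_i^2 C\pi_i/(1+(C-1)\pi_i)$ over the simplex-with-box polytope, so that KKT is both necessary and \emph{sufficient} and the block structure falls out of the threshold conditions $\lambda_i^2 \geq \alpha C$ (forcing $\pi_i=1$) versus $\lambda_i^2 \leq \alpha/C$ (forcing $\pi_i=0$). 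The paper instead sets up a joint Lagrangian over $(\omega,\pi)$, derives the same stationarity relations, and then needs a separate monotonicity argument on the per-term criterion values $F_i$ to conclude that nonzero probabilities must be assigned to the largest $\lambda_i$; your reduction buys sufficiency of the first-order conditions for free and makes the "no interleaved active sets" step (which you correctly flag as the main obstacle) a one-line consequence of sortedness, though to be fully rigorous you should still write out the boundary multiplier inequalities that yield those two thresholds. Your handling of $C=1$ via the identity $\Eexpect(1-\omega_i z_i)^2 = (1-\pi_i)+\pi_i(1-\omega_i)^2$ is also a valid, slightly more explicit alternative to the paper's direct appeal to Eckart--Young--Mirsky.
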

\begin{proof}
For $C=1$ the proof immediately follows from the Eckart–Young–Mirsky theorem~\citep{Eckart_Young_1936}  which states that truncated SVD provides the best low-rank approximation of the original matrix in terms of Frobenius discrepancy.
Since all the estimators defined by~\cref{eq:general_estimator} are low-rank approximations, it is impossible to obtain an average error which is strictly lower than the error of truncated SVD.
For the case $C > 1$, see~\cref{sec:derivation_collective_estimator}.
\end{proof}

Note that for a group consisting of a single client (i.e. $C=1$) the optimal solution coincides with top-$n$ sampling used in FedHM method.
For larger groups, similarly to the unbiased estimator, sweeping across all possible values of $t$ and $u$ is required on the server side.
Nevertheless, this search can be performed in vectorized form.

In practice, to apply \cref{th:collective_estimator}, the server can cluster the heterogeneous clients into homogeneous groups which share that same keep ratio $r$, and compute a specific sampling distribution for each group.
Such clustering is often employed by methods targeting diverse clients, cf.~\citep{mei2022resourceadaptive}.

\subsubsection{Joint Distribution and Sampling}
\label{subsec:joint_distribution}
The strategies derived in \cref{sec:unbiased_estimator,sec:collective_estimator} provide the values of marginal inclusion probabilities but do not specify the joint distribution, i.e. the co-occurrences of sampled terms.
\citet{tille_sampling_2006} presented a systematic survey of multiple sampling procedures which preserve the given inclusion probabilities.
Notably, each of the methods, while keeping the mean vector $\Eexpect \left(z_1,\dots,z_N\right) = \left(\pi_1,\dots,\pi_N\right)$ fixed, has different covariance matrices.
We follow the authors' recommendation and stick with the Conditional Poisson scheme (CPS) for our experiments.
The joint distribution achieved with this method has the maximum entropy among all distributions over samples of a size of exactly $n$ with the same marginals.
In theory, such a design should allow the model to better explore the interaction between the singular vectors during training.
The reference implementation of this sampling algorithm is provided in the accompanying package\footnote{\url{https://cran.r-project.org/package=sampling}} for \citep{tille_sampling_2006}.

\subsection{Local Training}
\label{subsec:local_training}
While the derivations in~\cref{subsec:inclusion_proba} provide an optimal (in a certain sense) estimator $\hat{W}^{\left(c\right)}$ of the full weight matrix $W$ for the client $c$ \emph{before} the local training starts on that client, they do not provide any guidance on how to optimize the received sub-model.

\topic{Auxiliary multipliers.}
As indicated in \cref{subsec:preliminaries}, we follow \citet{khodak_initialization_2022} and \citet{niu_overcoming_2023} and absorb the square root of the singular value $\lambda_i$ into both left and right singular vectors, $u'_i = \sqrt{\lambda_i} u_i, \, v'_i = \sqrt{\lambda_i} v_i$. 
During early stages of our experiments we explored the absorption of the auxiliary multipliers $\omega_i$ in the same manner. %
However, this led to unsatisfactory results.
Note that, except for the corner cases $\pi_i \in \left\{0, 1\right\}$, the multiplier $\omega_i$ is in inverse proportion to $\lambda_i$ and therefore the magnitude of their product is almost independent of $i$.
Therefore, after the product $\sqrt{\omega_i} \cdot \sqrt{\lambda_i}$ is directly `baked' into the sub-model weights, 
it is harder for the clients to distinguish between important and uninformative terms.

Instead, we treat the auxiliary multipliers as scaling factors, frozen for the current round of local training.
Hence, the effective weight matrix of the sub-model is trained locally in the factorized form $\hat{W} = U \Omega V^T$ with learnable terms $U$ and $V$.
This could introduce an issue of stale multipliers: during the course of local training the original meaning, along with the guarantees on the desired estimator properties, of the auxiliary multipliers is gradually fading away. 
However, this was not something we observe in practice and any attempts to mitigate this (e.g., with gradual decrease of the scaling factors) did not provide consistent improvement for the model performance.

\topic{Effective learning rate.}
Nevertheless, the chosen way of incorporating the auxiliary multipliers into the model still has a drawback.
Namely, the gradient of the scalar loss function $L$ w.r.t. the factors is expressed as $\frac{\partial L}{\partial U}=\frac{\partial L}{\partial \hat{W}} V \Omega, \quad \frac{\partial L}{\partial V}=\left(\frac{\partial L}{\partial \hat{W}}\right)^T U \Omega.$
This means that the auxiliary multipliers $\omega_i$ also affect the effective learning rate of the corresponding column vectors $u'_i$ and $v'_i$ when first-order optimization methods like SGD are used.
For the collective estimator it follows from \cref{eq:collective_omega_pi_value} that the values of the multipliers are bounded by the size of the group, $1 \leq \omega_i \leq C$.
However, for the unbiased estimators there is no such upper bound as $\omega_i = \pi_i^{-1}$.
Moreover, for small values of the keep ratio $r$ there necessarily should exist relatively large values of $\pi_i^{-1}$.
This is because 
\begin{equation}
\label{eq:sum_reciprocal_probas}
\resizebox{0.5\textwidth}{!}{$
    \Eexpect \sum_{j=1}^n \omega_{\Isubsample_j}
    = \Eexpect \sum_{j=1}^n \pi_{\Isubsample_j}^{-1}
    = \Eexpect \sum_{i=1}^N z_i \pi_{i}^{-1}
    = N,
$}
\end{equation}
and one needs to approximate $N$ with $n \approx rN$ randomly chosen terms.
This can lead to excessively high values of the effective learning rate.
Based on that, we adjust the nominal learning rate for each vector or, equivalently, override the value of the gradient before conducting the optimization step, $\frac{\partial L}{\partial u'_i} \leftarrow \frac{\partial L}{\partial u'_i} \cdot \min\lft(1, \frac{\tau}{\omega_i}\rgt), \quad \frac{\partial L}{\partial v'_i} \leftarrow \frac{\partial L}{\partial v'_i} \cdot \min\lft(1, \frac{\tau}{\omega_i}\rgt)$ for some threshold $\tau \geq 1$.
With this modification, the effective learning rate cannot be more than $\tau$ times higher than the nominal.
In all reported experiments we use $\tau = 10$.
Worth noting that equalizing the efficient learning rate with $\tau=1$ turned out very detrimental for the performance, as we found early in our experiments.
This coincides with the intuition behind some of adaptive optimization methods like AdaGrad~\citep{JMLR:v12:duchi11a} that perform larger updates for the parameters which are in charge of `less frequent' features.

\topic{Frobenius decay and momentum.}
As proposed in other works~\citep{khodak_initialization_2022,niu_overcoming_2023,yao_fedhm_2022} we use Frobenius weight decay (FD) during local training, i.e. we apply an additional loss function proportional to the $\left\lVert \hat{W} \right\rVert_F^2$ for each effective weight matrix.
The weight of FD in the resulting loss function is set to $\num{1e-4}$.
Additionally, confirming findings of \citet{niu_overcoming_2023}, we found it necessary to use plain SGD with momentum weight of $\num{0.9}$ during local optimization of the sub-model weights.

\section{Experiments}
\label{sec:experiments}

\subsection{Experimental Setup}
\topic{Datasets and models.}
To evaluate the proposed strategies, we conducted experiments on several datasets which are usually employed to test federated systems.
In case of artificial simulation of clients, we follow  \citep{hsu_nonidentical_2019} and sample the prior probabilities of classes for each client from the Dirichlet distribution $\textrm{Dir}\lft(\alpha p \rgt)$ where $p$ is the vector of class proportions in the original dataset and $\alpha$ is a hyperparameter that controls the amount of non-i.i.d.-ness in the data splits. 
Note that this notation is different from the one used in some of the recent papers, e.g.~\citep{yao_fedhm_2022,niu_federated_2022,niu_overcoming_2023}.

Since in our work we do not aim to reach new state-of-the-art in the field and instead focus on analyzing the \emph{relative} performance of the proposed strategies of spectral sharding, we have chosen to test all the methods in the presence of high data non-i.i.d.-ness between clients. 

For \emph{CIFAR-10}~\citep{Krizhevsky2009LearningML} we split the data with $\alpha=1$ and conduct experiments with a ResNet-18 model~\citep{he2015deep} with the kernel size of the first convolutional block equal to $3 \times 3$ and the normalization layer replaced with GroupNorm~\citep{pmlr-v119-hsieh20a}.
For \emph{TinyImagenet}~\citep{tinyimagenet} we use $\alpha=10$ and a compact transformer (CCT) model~\citep{hassani2022escaping}, namely CCT-6/3x2.
For \emph{CIFAR-100}~\citep{Krizhevsky2009LearningML} the two-staged Pachinko allocation method (PAM)~\citep{10.1145/1143844.1143917} is used:
for each client, at first parameters of the multinomial distribution over the twenty coarse labels are sampled with $\alpha=1$, and afterwards the distribution over the coarse-to-fine labels with $\alpha=10.$
On this dataset we train both the ResNet and CCT models described above.
We select \emph{Shakespeare}~\citep{pmlr-v54-mcmahan17a} as an example of a dataset with a natural data split over clients. 
We train a tiny transformer model with three GPT-2~\citep{radford2019language} blocks on the task of next character prediction and report the performance in terms of accuracy in accordance with prior works~\citep{reddi2021adaptive}.

For all the architectures we do not decompose the very first and very last layers of the model~\citep{khodak_initialization_2022,niu_overcoming_2023}.
However, unlike PriSM, we decompose all the affine layers inside the attention blocks, not just the fully connected ones~\citep{wang_cuttlefish_2023}.
Also, we follow \citet{niu_overcoming_2023,wang_cuttlefish_2023} instead of \citet{khodak_initialization_2022} and decompose a convolutional layer to a sequence of a regular convolution with $n$ output channels followed by a $1 \times 1$ convolution.
For image datasets we employ per-image pixel value standardization similarly to \citep{reddi2021adaptive} but do not apply random cropping.

\topic{Training.}
We train all models from scratch with cosine annealing learning rate scheduler~\citep{loshchilov2017sgdr}.
The initial value for learning rate is $0.1$ for CIFAR-10, $0.05$ for CIFAR-100 and TinyImagenet, and $0.1$ for the Shakespeare data.
The client's batch size equals 32, 64, 128, and 10 respectively.
All experiments are run with three random seeds which also affect data splitting between clients, if applicable.
Standard deviation is reported in all tables and plots based on those runs.

\topic{Federated setup.}
While Shakespeare dataset naturally contains 715 clients, we simulate 100 clients for all other datasets.
We randomly sample 10 clients for each communication rounds which results in participation ratio of $10\%$ on image datasets and about $1.4\%$ for Shakespeare.
In each communication round all participating clients train their sub-models for two local epochs.
The total number of local epochs (e.g., number of local epochs per round $\times$ number of communication rounds)  equals $\num{2000}$ for CIFAR-10, $\num{3000}$ for CIFAR-100, $\num{5000}$ for TinyImagenet and $\num{3000}$ for Shakespeare.

\topic{Baselines.}
As stated above, we focus on exploring different strategies of sampling sub-models when a particular approach of training on weak and, possibly, heterogeneous devices was chosen, namely the approach described in \cref{subsec:preliminaries}.
Therefore, we compare our presented strategies, denoted as \emph{Unbiased} and \emph{Collective}, with the \emph{Top}-$n$ sampling proposed by \citep{yao_fedhm_2022} and the \emph{PriSM} method introduced in \citep{niu_federated_2022,niu_overcoming_2023}.
We copy the value of the hyperparameter $k$ required for sampling in PriSM from the original implementation.
In detail, $k$ depends on the keep ratio $r$: $k=4$ if $r \leq 0.2$ and $k=2.5$ otherwise.

To decouple the sampling strategies themselves from other training details discussed in \cref{sec:method} we additionally introduce modifications to the PriSM strategy: motivated from our unbiased method, we train it with auxiliary multipliers to allow for (approximate) unbiasedness of the estimators, i.e. $\omega_i = \pi_i^{-1}$ (this strategy is dubbed as \emph{PriSM + Wallenius}), and clip the effective learning rate (\emph{PriSM + Wallenius + ClipLR}).
Exact computation of the mean vector for the Wallenius' distribution occurring in these two strategies is very time-consuming for the ranks $N$ of weight matrices used in practice.
For that reason we use the approximate algorithm\footnote{\url{https://cran.r-project.org/package=BiasedUrn}} from~\citep{fog_calculation_2008}.

Finally, we explore the simplest option of compensating for the missing terms of the estimator, namely, introducing a scaling factor that keeps the Frobenius norm of the estimator equal to that of the full matrix, i.e.
\begin{math}
    \omega_i^{\left(c\right)} 
    = \sqrt{\frac{\sum_{k=1}^N \lambda_k^2}{\sum_{k=1}^N z_k^{\left(c\right)} \lambda_k^2}}
\end{math}
for all $i$.
This modification is marked as \emph{+Scaled}.

All the considered methods require equal communication costs and equal amount of computation on the client side per round, since the overhead caused by the vector of auxiliary weights introduced by novel strategies is negligible.

\begin{table}[t]
\newcommand{\leftspace}{\phantom{ab}}
\caption{%
    \topic[0mm]{Accuracy achieved by different strategies under limited computational budget}.
    Our \emph{Unbiased} and \emph{Collective} strategies outperform the vanilla \emph{Top}-$n$ and \emph{PriSM} baselines on all datasets except for Shakespeare, although the gap is not significant for that dataset. 
    In addition, the modifications proposed for local training allow PriSM to significantly improve its performance for ResNet architecture, and sometimes even surpass  other methods in the current setting.
}
\label{tab:main_table}
\begin{center}
\begin{small}
\begin{sc}
\resizebox{0.77\textwidth}{!}{ \begin{tabular}{@{}lccccc}
\toprule
\multirow{2}{*}{\leftspace Strategy}%
& CIFAR-10%
& TinyImagenet%
& \multicolumn{2}{c}{CIFAR-100} %
& Shakespeare\\%
& ResNet%
& CCT%
& ResNet%
& CCT%
& Transformer\\%
\midrule%
\textit{Keep ratio} $r = \num{0.1}$\\%
\leftspace Top-$n$   & ${ 67.44 }_{ \pm{ 2.34 } }$ & ${ 34.29 }_{ \pm{ 0.12 } }$ & ${ 21.13 }_{ \pm{ 2.01 } }$ & ${ 47.34 }_{ \pm{ 1.10 } }$ & $\mathbf{ 48.83 }_{ \pm{ 0.45 } }$ \\%
\leftspace Top-$n$ + Scaled   & ${ 63.01 }_{ \pm{ 0.65 } }$ & ${ 22.29 }_{ \pm{ 0.76 } }$ & ${ 14.13 }_{ \pm{ 0.76 } }$ & ${ 33.96 }_{ \pm{ 0.77 } }$ & ${ 36.14 }_{ \pm{ 0.77 } }$ \\%
\leftspace PriSM   & ${ 71.15 }_{ \pm{ 3.77 } }$ & ${ 33.61 }_{ \pm{ 0.88 } }$ & ${ 18.45 }_{ \pm{ 1.31 } }$ & ${ 44.60 }_{ \pm{ 0.90 } }$ & ${ 46.91 }_{ \pm{ 1.36 } }$ \\%
\leftspace PriSM + Scaled   & ${ 76.03 }_{ \pm{ 5.00 } }$ & ${ 35.10 }_{ \pm{ 0.39 } }$ & ${ 28.65 }_{ \pm{ 1.07 } }$ & ${ 42.14 }_{ \pm{ 0.88 } }$ & ${ 43.34 }_{ \pm{ 3.17 } }$ \\%
\leftspace PriSM + Wallenius   & ${ 77.53 }_{ \pm{ 1.25 } }$ & ${ 37.43 }_{ \pm{ 0.87 } }$ & ${ 37.75 }_{ \pm{ 1.04 } }$ & ${ 42.04 }_{ \pm{ 1.11 } }$ & ${ 45.60 }_{ \pm{ 0.66 } }$ \\%
\leftspace PriSM + Wallenius + ClipLR   & ${ 82.42 }_{ \pm{ 0.35 } }$ & ${ 35.43 }_{ \pm{ 0.91 } }$ & $\mathbf{ 40.36 }_{ \pm{ 0.49 } }$ & ${ 35.50 }_{ \pm{ 1.24 } }$ & ${ 48.24 }_{ \pm{ 0.73 } }$ \\%
\leftspace Unbiased   & ${ 80.57 }_{ \pm{ 1.39 } }$ & ${ 37.57 }_{ \pm{ 0.17 } }$ & ${ 35.12 }_{ \pm{ 1.51 } }$ & ${ 45.01 }_{ \pm{ 1.07 } }$ & ${ 48.23 }_{ \pm{ 0.35 } }$ \\%
\leftspace Collective   & $\mathbf{ 82.59 }_{ \pm{ 0.27 } }$ & $\mathbf{ 38.72 }_{ \pm{ 0.86 } }$ & ${ 37.83 }_{ \pm{ 1.24 } }$ & $\mathbf{ 49.93 }_{ \pm{ 1.77 } }$ & ${ 48.64 }_{ \pm{ 0.50 } }$ \\%
\textit{Keep ratio} $r = \num{0.2}$\\%
\leftspace Top-$n$   & ${ 78.37 }_{ \pm{ 0.30 } }$ & ${ 38.37 }_{ \pm{ 0.42 } }$ & ${ 35.97 }_{ \pm{ 3.19 } }$ & ${ 51.70 }_{ \pm{ 1.24 } }$ & $\mathbf{ 51.21 }_{ \pm{ 0.50 } }$ \\%
\leftspace Top-$n$ + Scaled   & ${ 80.75 }_{ \pm{ 1.06 } }$ & ${ 34.99 }_{ \pm{ 0.33 } }$ & ${ 35.29 }_{ \pm{ 2.02 } }$ & ${ 47.68 }_{ \pm{ 0.70 } }$ & ${ 45.82 }_{ \pm{ 0.38 } }$ \\%
\leftspace PriSM   & ${ 82.59 }_{ \pm{ 1.09 } }$ & ${ 38.66 }_{ \pm{ 0.28 } }$ & ${ 33.76 }_{ \pm{ 0.76 } }$ & ${ 52.04 }_{ \pm{ 1.62 } }$ & ${ 49.63 }_{ \pm{ 0.50 } }$ \\%
\leftspace PriSM + Scaled   & ${ 81.63 }_{ \pm{ 3.52 } }$ & ${ 40.30 }_{ \pm{ 0.48 } }$ & ${ 40.86 }_{ \pm{ 1.08 } }$ & ${ 51.64 }_{ \pm{ 1.30 } }$ & ${ 48.30 }_{ \pm{ 0.97 } }$ \\%
\leftspace PriSM + Wallenius   & ${ 82.15 }_{ \pm{ 0.14 } }$ & ${ 40.47 }_{ \pm{ 0.93 } }$ & ${ 45.11 }_{ \pm{ 0.63 } }$ & ${ 45.22 }_{ \pm{ 0.86 } }$ & ${ 45.35 }_{ \pm{ 0.50 } }$ \\%
\leftspace PriSM + Wallenius + ClipLR   & ${ 84.60 }_{ \pm{ 1.24 } }$ & ${ 38.13 }_{ \pm{ 0.62 } }$ & ${ 46.73 }_{ \pm{ 1.30 } }$ & ${ 38.54 }_{ \pm{ 1.48 } }$ & ${ 49.44 }_{ \pm{ 0.39 } }$ \\%
\leftspace Unbiased   & ${ 84.33 }_{ \pm{ 1.14 } }$ & $\mathbf{ 41.28 }_{ \pm{ 0.31 } }$ & ${ 46.34 }_{ \pm{ 0.02 } }$ & ${ 53.39 }_{ \pm{ 0.81 } }$ & ${ 50.12 }_{ \pm{ 0.89 } }$ \\%
\leftspace Collective   & $\mathbf{ 85.00 }_{ \pm{ 0.34 } }$ & ${ 41.12 }_{ \pm{ 0.11 } }$ & $\mathbf{ 47.74 }_{ \pm{ 1.15 } }$ & $\mathbf{ 54.31 }_{ \pm{ 0.27 } }$ & ${ 50.05 }_{ \pm{ 0.44 } }$ \\%
\multicolumn{6}{@{}l}{\textit{60\% clients have keep ratio $r=0.2$ and 40\% clients have $r=0.4$}}\\%
\leftspace Top-$n$ & ${ 84.43 }_{ \pm{ 0.90 } }$ & ${ 36.89 }_{ \pm{ 0.14 } }$ & ${ 41.34 }_{ \pm{ 1.36 } }$ & ${ 53.69 }_{ \pm{ 0.71 } }$ & $\mathbf{ 51.35 }_{ \pm{ 0.89 } }$ \\%
\leftspace Top-$n$ + Scaled   & ${ 85.24 }_{ \pm{ 0.19 } }$ & ${ 37.12 }_{ \pm{ 0.30 } }$ & ${ 42.24 }_{ \pm{ 0.80 } }$ & ${ 47.99 }_{ \pm{ 1.26 } }$ & ${ 50.11 }_{ \pm{ 0.70 } }$ \\%
\leftspace PriSM   & ${ 82.37 }_{ \pm{ 0.62 } }$ & ${ 31.25 }_{ \pm{ 0.21 } }$ & ${ 23.04 }_{ \pm{ 2.01 } }$ & ${ 37.47 }_{ \pm{ 0.26 } }$ & ${ 49.75 }_{ \pm{ 0.03 } }$ \\%
\leftspace PriSM + Scaled   & ${ 82.75 }_{ \pm{ 0.14 } }$ & ${ 34.86 }_{ \pm{ 0.41 } }$ & ${ 28.41 }_{ \pm{ 2.12 } }$ & ${ 42.45 }_{ \pm{ 0.90 } }$ & ${ 49.69 }_{ \pm{ 0.22 } }$ \\%
\leftspace PriSM + Wallenius   & ${ 78.71 }_{ \pm{ 1.18 } }$ & ${ 34.83 }_{ \pm{ 1.09 } }$ & ${ 34.03 }_{ \pm{ 2.02 } }$ & ${ 34.23 }_{ \pm{ 1.00 } }$ & ${ 38.21 }_{ \pm{ 2.02 } }$ \\%
\leftspace PriSM + Wallenius + ClipLR   & ${ 82.95 }_{ \pm{ 0.84 } }$ & ${ 39.31 }_{ \pm{ 0.17 } }$ & ${ 32.88 }_{ \pm{ 0.66 } }$ & ${ 40.07 }_{ \pm{ 0.39 } }$ & ${ 49.45 }_{ \pm{ 0.85 } }$ \\%
\leftspace Unbiased   & ${ 85.09 }_{ \pm{ 0.52 } }$ & $\mathbf{ 40.78 }_{ \pm{ 0.18 } }$ & $\mathbf{ 44.61 }_{ \pm{ 2.43 } }$ & $\mathbf{ 54.16 }_{ \pm{ 0.59 } }$ & ${ 51.05 }_{ \pm{ 0.63 } }$ \\%
\leftspace Collective   & $\mathbf{ 86.11 }_{ \pm{ 0.66 } }$ & ${ 39.90 }_{ \pm{ 0.43 } }$ & ${ 43.74 }_{ \pm{ 1.87 } }$ & ${ 52.52 }_{ \pm{ 0.93 } }$ & ${ 50.74 }_{ \pm{ 1.15 } }$ \\%
\midrule
\textit{Keep ratio} $r=1$ & ${ 89.54 }_{ \pm{ 0.49 } }$ & ${ 41.28 }_{ \pm{ 0.65 } }$ & ${ 63.45 }_{ \pm{ 0.34 } }$ & ${ 59.25 }_{ \pm{ 1.05 } }$ & ${ 52.10 }_{ \pm{ 1.22 } }$ \\%
\textit{No sharding}  & ${ 87.46 }_{ \pm{ 0.31 } }$ & ${ 38.68 }_{ \pm{ 0.16 } }$ & ${ 63.35 }_{ \pm{ 0.68 } }$ & ${ 60.05 }_{ \pm{ 0.79 } }$ & ${ 52.89 }_{ \pm{ 0.10 } }$ \\%
\bottomrule
\end{tabular} }
\end{sc}
\end{small}
\end{center}

\vspace{-5ex}
\end{table}
\begin{table}[t]
\begin{minipage}[t]{0.49\textwidth}
    \caption{
    \topic[0mm]{Longer training on CIFAR-100.}
    Despite our `unbiased' modification of PriSM  demonstrated the best accuracy in case of limited computation budget, the more explorative Collective strategy closes the gap in performance if the number of communication rounds is increased.
}
\label{tab:train_longer_cifar100}
\vspace{-1ex}
\begin{center}
\begin{small}
\begin{sc}
\resizebox{0.88\textwidth}{!}{ \begin{tabular}{lcc}
\toprule
\multirow{2}{*}{Strategy}%
& \multicolumn{2}{c}{\# communication rounds}\\%
& \num{1500} & \num{5000}\\%
\midrule%
Top-$n$ & ${ 21.13 }_{ \pm{ 2.01 } }$ & ${ 32.22 }_{ \pm{ 1.74 } }$ \\%
Top-$n$ + Scaled & ${ 14.13 }_{ \pm{ 0.76 } }$ & ${ 39.68 }_{ \pm{ 0.32 } }$ \\%
PriSM & ${ 18.45 }_{ \pm{ 1.31 } }$ & ${ 51.85 }_{ \pm{ 1.25 } }$ \\%
PriSM + Scaled & ${ 28.65 }_{ \pm{ 1.07 } }$ & ${ 57.08 }_{ \pm{ 0.45 } }$ \\%
PriSM + Wallenius & ${ 37.75 }_{ \pm{ 1.04 } }$ & ${ 59.81 }_{ \pm{ 0.37 } }$ \\%
PriSM + Wallenius + ClipLR & $\mathbf{ 40.36 }_{ \pm{ 0.49 } }$ & ${ 60.02 }_{ \pm{ 0.50 } }$ \\%
Unbiased & ${ 35.12 }_{ \pm{ 1.51 } }$ & ${ 59.44 }_{ \pm{ 0.08 } }$ \\%
Collective & ${ 37.83 }_{ \pm{ 1.24 } }$ & $\mathbf{ 60.26 }_{ \pm{ 0.31 } }$ \\%
\bottomrule
\end{tabular} }
\end{sc}
\end{small}
\end{center}

\end{minipage}%
\hfill%
\begin{minipage}[t]{0.49\textwidth}

\newcommand{\leftspace}{\phantom{ab}}
\caption{%
    \topic[0mm]{Influence of the clipped learning rate.}
    In our experiments, we find that clipping the effective learning rate is beneficial for the Unbiased strategy in case of all architectures and values of the keep ratio $r$.
    Without clipping  the performance drops consistently.
}
\label{tab:ablation_cliplr}
\vspace{-1ex}
\begin{center}
\begin{small}
\begin{sc}
\resizebox{\textwidth}{!}{ 
\begin{tabular}{@{}lcc}
\toprule
\leftspace Strategy
& ResNet
& CCT\\%
\midrule
\textit{Keep ratio} $r=\num{0.1}$\\%
\leftspace Unbiased & ${ 35.12 }_{ \pm{ 1.51 } }$  & ${ 45.01 }_{ \pm{ 1.07 } }$\\%
\leftspace Unbiased w/o ClipLR & ${ 30.14 }_{ \pm{ 1.58 } }$ & ${ 32.46 }_{ \pm{ 0.71 } }$ \\%
\textit{Keep ratio} $r=\num{0.2}$\\%
\leftspace Unbiased &  ${ 46.34 }_{ \pm{ 0.02 } }$ &   ${ 53.39 }_{ \pm{ 0.81 } }$\\%
\leftspace Unbiased w/o ClipLR & ${ 43.11 }_{ \pm{ 1.02 } }$ &  ${ 46.67 }_{ \pm{ 0.85 } }$\\%
\bottomrule
\end{tabular} 
}
\end{sc}
\end{small}
\end{center}

\end{minipage}%
\vspace{-4ex}
\end{table}

\subsection{Results}
\subsubsection{Main Results}
We report the accuracy of different strategies in all the datasets considered in~\cref{tab:main_table}.
In our experiments we mostly explore weak clients with low keep ratio.
The results are provided for two setups with homogeneous clients ($r=0.1$ and $0.2$) and one setup with two groups, namely $60\%$ clients having a keep ratio of $0.2$ and $40\%$ of $0.4$.
Also, for reference purposes we train a conventional full model on all the clients without sharding, as well as an over-parameterized model with keep ratio $r=1$.
Similarly to what was reported by ~\citet{khodak_initialization_2022}, we observe that overparameterization can provide a better performance than usual training.

As the results demonstrate, the preservation of the Frobenius norm (\emph{+Scaled}) is the least effective and stable among the considered modifications.
It is often detrimental for the Top-$n$ strategy, and although it sometimes improves the performance of PriSM, the PriSM modification based on unbiasedness (\emph{+Wallenius+ClipLR}) is generally more successful.
In certain cases, it even achieves the best accuracy  among all models, e.g. see the results of ResNet on CIFAR-100.
However, overall the `unbiasedness' has an inconsistent effect on PriSM: it is usually beneficial in case of ResNet but is less helpful or even harmful for attention-based architectures.
As has already been noted, deeper analysis of such behavior is complicated due to unclear statistical properties of the PriSM sampling procedure and we leave this fo future work.
One possible reason is that the approximation given by Wallenius' distribution may be less precise in some cases.%

Our strategies presented in~\cref{sec:method} in most cases outperform the baselines for all datasets but Shakespeare where Top-$n$ demonstrates superior results, although the difference in performance is not significant.
As the results imply, the Collective strategy is more suitable if the keep ratio is small, while for larger $r$ the Unbiased strategy seems more preferable. 
This may be explained by the negative influence of too large auxiliary multipliers which also amplify the impact of less informative terms during the forward pass; see~\cref{subsec:local_training}.

Interestingly, in all experiments, the novel sampling strategies significantly improve the results of baselines if those had a large accuracy gap compared to the full model. 
For the Shakespeare dataset, in contrast, all the low-rank methods achieve performance close to the full model even for small values of the keep ratio. 
Therefore, it seems that the performance on Shakespeare is determined more by the properties of the data split and the corresponding federated setup than by the training strategy. 

\subsubsection{Discussion}
\topic{Explorative and exploitative strategies.}
To understand the behavior of the strategies better, we propose a tool named \emph{average normalized marginal entropy} (ANME).
For each decomposed layer with inclusion probabilities $\left(\pi_1,\dots,\pi_N\right)$ we calculate the mean entropy of the corresponding marginal Bernoulli distributions, 
\begin{math}
    \frac{1}{N} \sum_{i: \, 0 < \pi_i < 1} \left( - \pi_i \log \pi_i - \left(1 - \pi_i\right) \log \left(1 - \pi_i\right)\right).
\end{math}
It is easy to show that the minimum value of this quantity equals 0 and is achieved by the deterministic Top-$n$ strategy.
The maximum value is achieved when all probabilities are equal, i.e. $\pi_i = \frac{n}{N}$.
Therefore, we normalize the mean entropy from above by dividing it by the mean entropy of the uniform inclusion probabilities.
To finish the computation of ANME, we average this normalized entropy across all sharded layers in the network.

Intuitively, low values of ANME mean that the server tends to send the same terms to all the clients, and we name such strategies \emph{exploitative}.
On the contrary, high value of ANME shows that the strategy is \emph{explorative} and the selection of terms for each client is `more random'.
We observe the same qualitative behaviour in all our experiments (see~\cref{fig:avg_normalized_marginal_entropy}): PriSM is the most explorative strategy, while the modification based on unbiasedness turns it into the most exploitative among randomized methods. 
The Unbiased and Collective strategies are between these two extreme types of behaviour, and the latter is usually more exploitative than the former one.
Based on this observation, we check if training with more communication rounds can help the explorative strategies to perform better and report the results in~\cref{tab:train_longer_cifar100}.
Indeed, all methods improve with more training and the largest improvement is for PriSM, which is the most explorative. We also observe that our collective strategy overtakes our modified PriSM baseline and achieves the best performance.

\topic{Influence of learning rate clipping.}
In all the experiments we found that clipping the effective learning rate is necessary for our strategies.
\cref{tab:ablation_cliplr} illustrates this observation by using models trained on CIFAR-100.
Notably, \cref{tab:main_table} demonstrates that, empirically, such clipping is not always beneficial for the modified PriSM method.
This may be explained by the fact that `unbiased' PriSM becomes extremely exploitative.
In this case, terms which could produce too large auxiliary multipliers are very unlikely to be sampled in practice, resulting in the large variance of the left-hand side of~\cref{eq:sum_reciprocal_probas}.

\topic{More results.}
Please refer to the~\cref{sec:additional_discussion} for additional results on model convergence, post-client updates and influence of data heterogeneity.

\begin{figure}[t]
\centering
\begin{subfigure}[t]{0.49\textwidth}
    \centering
    \includegraphics[width=\textwidth]{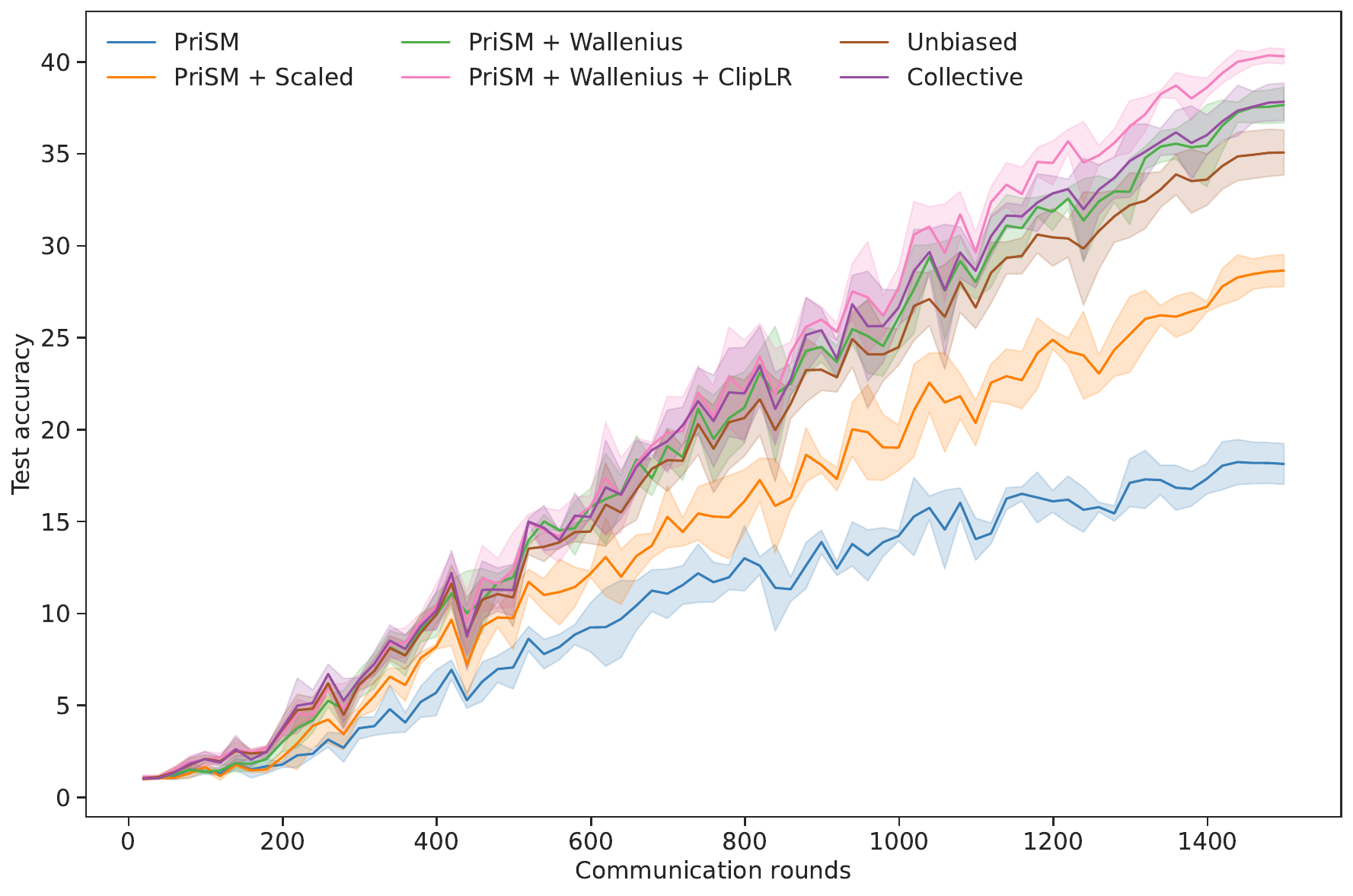}
    \vspace{-3ex}
    \caption{Accuracy of the global ResNet model on the test subset of CIFAR-100.}
    \label{fig:communication_efficiency}
\end{subfigure}
\hfill
\begin{subfigure}[t]{0.49\textwidth}
    \centering
    \includegraphics[width=\textwidth]{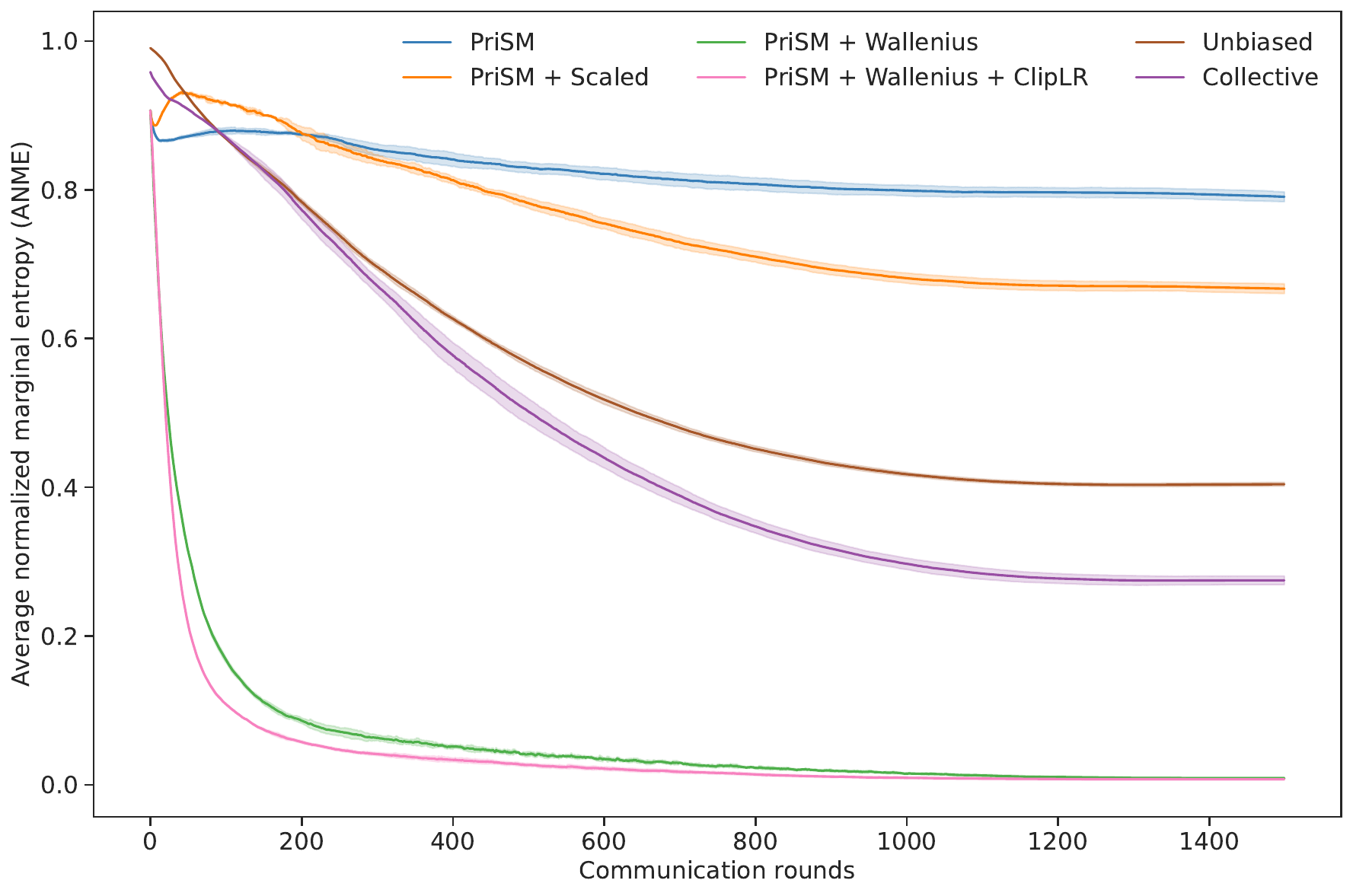}
    \vspace{-3ex}
    \caption{Average normalized marginal entropy (ANME) shows how diverse each strategy is.}
    \label{fig:avg_normalized_marginal_entropy}
\end{subfigure}
\caption{%
    \topic[0mm]{Communication efficiency.}
    The original PriSM method is too explorative (high ANME), while our `unbiased' modification (\emph{+Wallenius}) makes it the most exploitative strategy and allows to achieve the best performance in some experiments with limited computational budget.
}
\label{fig:main_figure}
\vspace{-4ex}
\end{figure}

\section{Conclusion}
\label{sec:conclusion}
We presented two novel sampling strategies for spectral model sharding, grounded as solutions to specific optimization problems. 
Alongside them, we presented techniques that facilitate local training with such methods. %
As shown, in a number of cases these techniques can also significantly improve the performance of the strategies proposed earlier in the literature.
Nonetheless, our strategies demonstrate superior performance on a number of commonly used datasets in the presence of high data heterogeneity between clients.
As a downside, in certain cases their learning curve is less steep than that of the baselines due to their more explorative nature.
We leave the improvement of the convergence speed  for future work.

\ifnum\value{page}>9 \errmessage{Number of pages exceeded!!!!}\fi

\bibliography{neurips_2024/egbib}
\bibliographystyle{abbrvnat}

\newpage
\appendix
\begin{figure}[t]
\centering
\includegraphics[width=\textwidth]{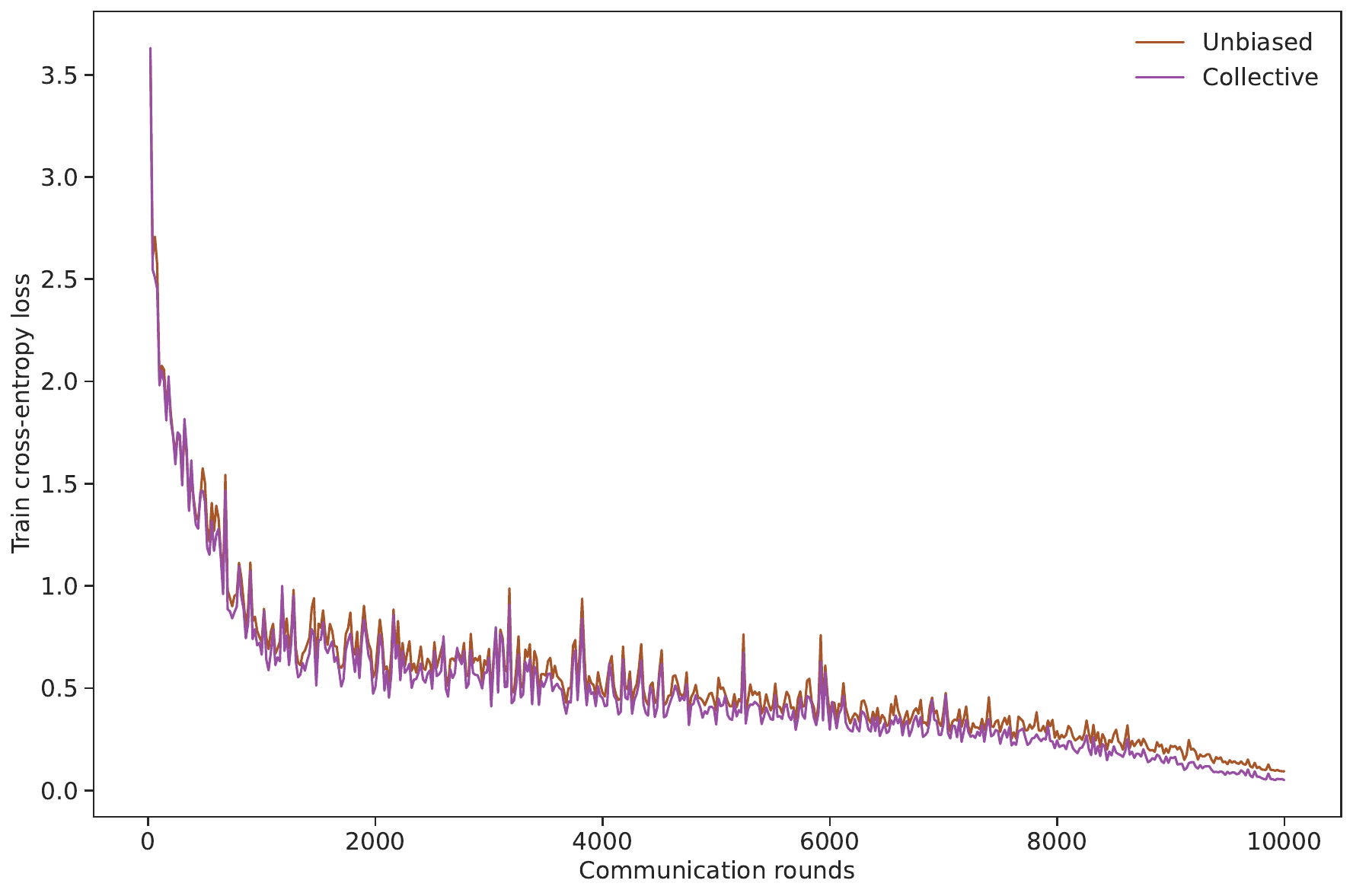}
\vspace{-3ex}
\caption{%
    \topic[0mm]{Convergence analysis.}
    When being trained longer, the proposed strategies demonstrate the decrease of the cross-entropy loss of the global model on the training set.
    Unbiased strategy reaches the train accuracy of $\num{97.0}\%$, and Collective strategy achieves $\num{98.4}\%$.
    This serves as empirical evidence of convergence for our method.
}
\label{fig:convergence_analysis}
\vspace{1ex}
\nextfloat
\centering
\begin{subfigure}[t]{0.49\textwidth}
    \centering
    \includegraphics[width=\textwidth]{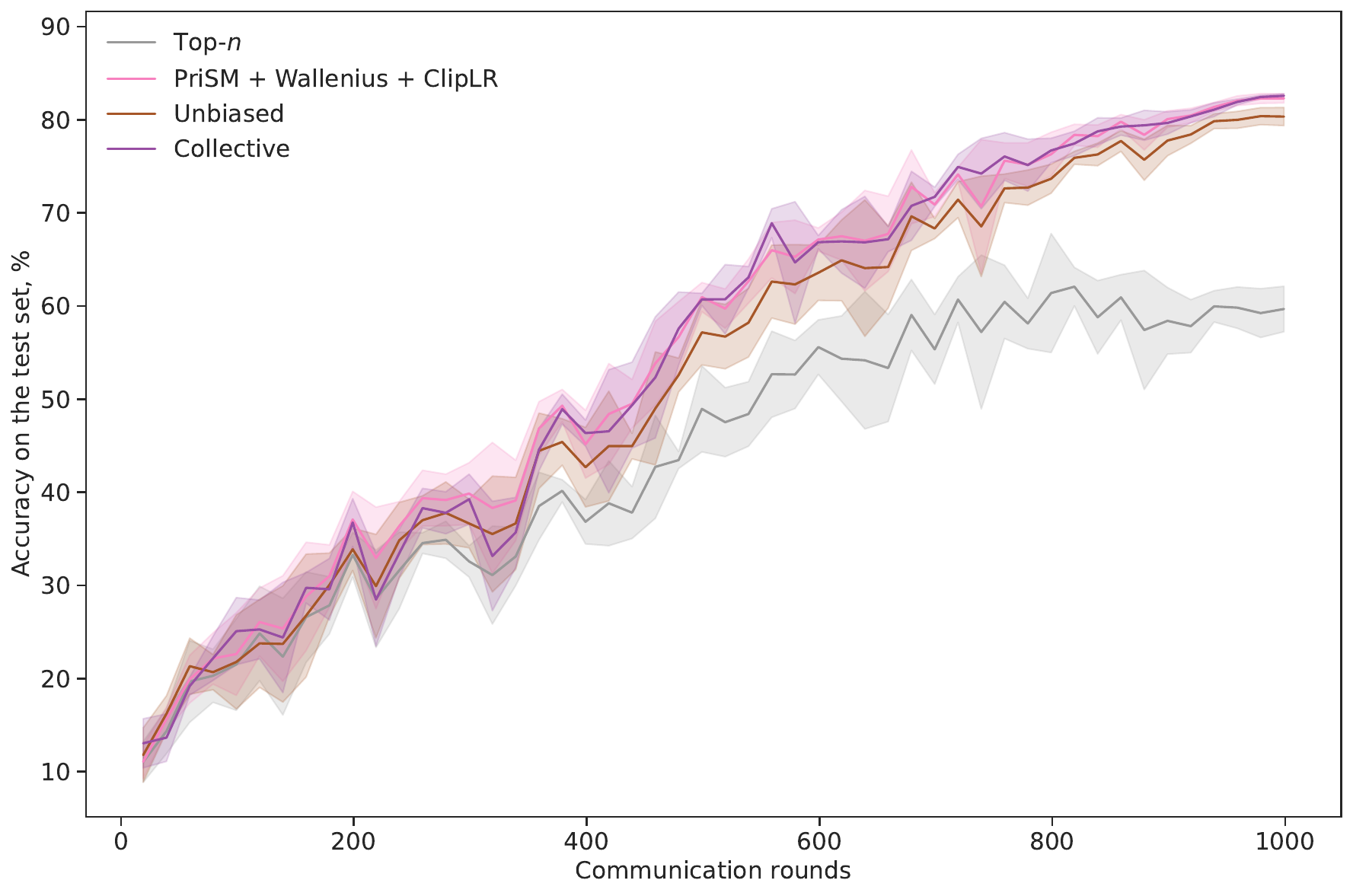}
    \subcaption{High data heterogeneity $\alpha = \num{1}.$}
\end{subfigure}
\hfill
\begin{subfigure}[t]{0.49\textwidth}
    \centering
    \includegraphics[width=\textwidth]{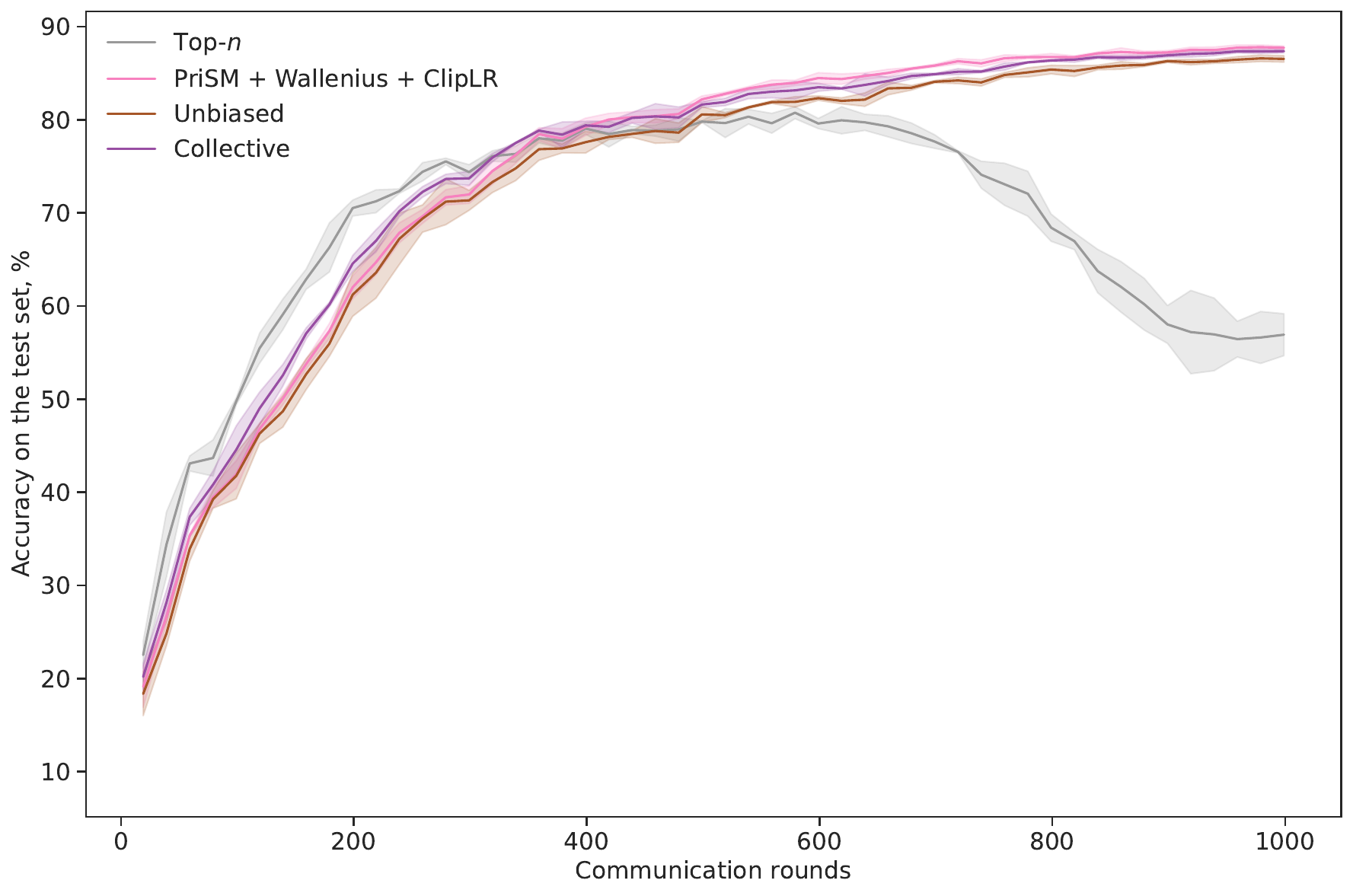}
    \subcaption{Low data heterogeneity $\alpha = \num{10}.$}
\end{subfigure}
\caption{%
    \topic[0mm]{Impact of data heterogeneity on communication efficiency.}
    When data distribution between clients is closer to i.i.d, the most exploitative Top-$n$ strategy demonstrates the best training speed in the beginning, however it overfits soon. 
    For more heterogeneous data, strategies with exploration achieve much better performance.
}
\label{fig:data_hetero_and_communication}
\end{figure}

\section{Additional Discussion}
\label{sec:additional_discussion}

\topic{Convergence analysis.}
It is rather involved to provide the theoretical proof of convergence for any of spectral sharding methods. 
In practice, we observe that the models trained in this manner successfully converge even in the presence of high data heterogeneity between clients. 
As an example, \cref{fig:convergence_analysis} demonstrates the cross-entropy loss on the train set for the ResNet model with keep ratio $r = 0.1$  trained on CIFAR-10 with non-i.i.d.-ness parameter $\alpha = 1$.
The loss was computed at the end of each communication round, the number of rounds was intentionally increased up to \num{10000} for this experiment. 
It is easy to see that the loss successfully decreases as the training progresses. 
Therefore, given these empirical observations, we did not focus on the formal convergence analysis.
Having said that, it is an interesting direction for future work, as it can potentially uncover further improvements to our approach.

\topic{Impact of data heterogeneity.}
Throughout our paper, we evaluate all the strategies in the presence of a relatively high degree of data heterogeneity between clients. 
E.g., note that $\alpha = 1$ for CIFAR-10 roughly corresponds to only one or two distinct class labels present for each client, and $\alpha = 10$ for TinyImagenet means that each client only has access to 20-30\% of all class labels.

We demonstrate the effect of data heterogeneity on the communication efficiency of the best performing strategies on CIFAR-10  in~\cref{fig:data_hetero_and_communication}.
When clients’ data distributions are closer to i.i.d., i.e. $\alpha = 10$, Top-$n$ strategy is the most efficient in the beginning, however it overfits soon. 
For more heterogeneous data, i.e. $\alpha = 1$, the proposed modification of PriSM and the novel Collective strategy are the most efficient and deliver the best accuracy.

Note that the novel strategies developed in our work do not use any knowledge of data distribution between clients.
Neither they rely on the presence of a public dataset on the server side. 
Nevertheless, we believe that incorporating such information can be beneficial for the Collective strategy and leave this for future work.

\topic{Influence of the joint distribution.}
To test the sensitivity of the results to the choice of the joint distribution of the sampled indices while keeping the marginal distributions frozen, we compare the selected CPS sampling against two other methods, Brewer's~\citep{brewer_simple_1975} and Minimum support~\citep{tille_sampling_2006}.
The advantage of those methods is that they allow for significantly faster sampling than CPS.
Perhaps surprisingly, we do not notice any significant difference in our experiments, as reported in~\cref{tab:ablation_sampling}.
We suggest that the effect of high entropy which is characteristic for CPS may be shown on more large-scale problems and leave this for future work.
\begin{wrapfigure}{l}{0.5\textwidth}
\centering
\includegraphics[width=0.5\textwidth]{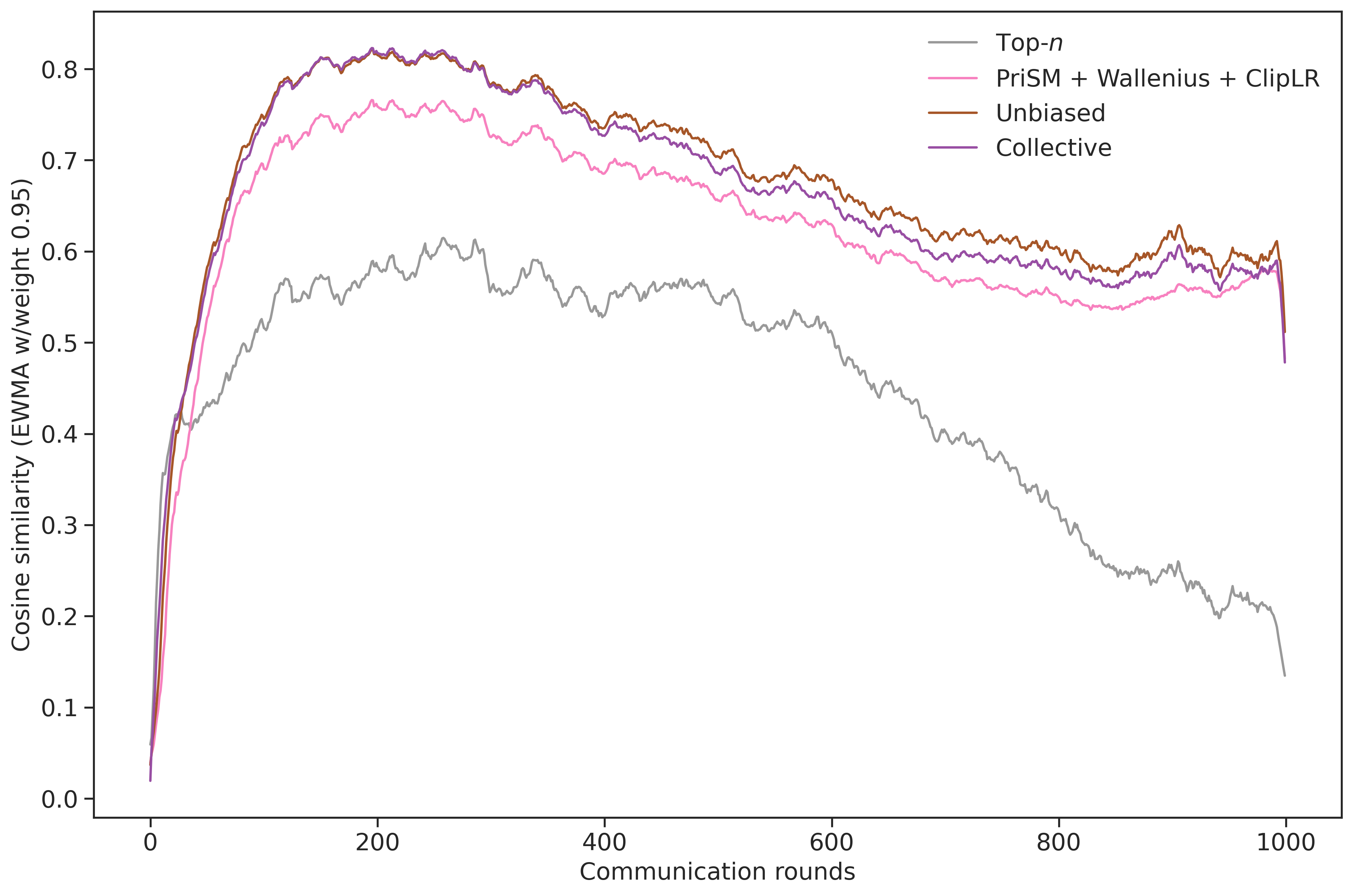}
\vspace{-1ex}
\caption{\topic[0mm]{Comparison with FedAvg weight updates.}
    For ResNet model trained on CIFAR-10, updates provided by the Top-$n$ strategy significantly deviate from those of FedAvg method. This correlates with the worse performance in this experiment.
}
\label{fig:post_client_updates}
\end{wrapfigure}

\topic{Post-client updates.}
To analyze the post-gradient updates, we followed the optimization path $ \{ \theta_t \}_{t=1}^{T} $ of the server model under each strategy with $T$ communication rounds and calculated the cosine similarity between the server model update $\Delta_{sharding\_strategy} \theta_t$ under that strategy and $\Delta_{FedAvg} \theta_t$ under FedAvg method with full-size model on each client. 
Subsequently, the model weights were updated according to the selected strategy: $\theta_{t+1} = \theta_t + \Delta_{sharding\_strategy} \theta_t. $ 
The results of these experiments are provided in~\cref{fig:post_client_updates}.
As shown, there is a notable connection between the relative performance of each strategy and its “alignment” with FedAvg updates. 
The deterministic Top-$n$ strategy lags behind its randomized counterparts in terms of accuracy in the considered setting (see~\cref{tab:main_table}), and a similar observation is made for its update directions.

\begin{table}[t]
\begin{minipage}[b]{0.49\textwidth}
    \caption{%
    \topic[0mm]{Influence of the joint distribution.}
    We have not found the evidence of the advantage of maximum entropy distribution (CPS) over other methods for our tasks.
    On CIFAR-100, all three sampling methods perform similarly provided that marginal inclusion probabilities are preserved.
}
\label{tab:ablation_sampling}
\begin{center}
\begin{small}
\begin{sc}
\resizebox{\textwidth}{!}{ 
\begin{tabular}{lclccc}
\toprule
Model
& \footnotesize {\shortstack{Keep \\ ratio}}%
& Strategy%
& CPS
& Brewer
& MinSupport \\%
\midrule%
\multirow{4}{*}{ResNet} & \multirow{2}{*}{0.1} & Unbiased & ${ 35.12 }_{ \pm{ 1.51 } }$ & ${ 35.67 }_{ \pm{ 1.73 } }$ & ${ 35.19 }_{ \pm{ 1.72 } }$ \\%
&& Collective & ${ 37.83 }_{ \pm{ 1.24 } }$ & ${ 37.55 }_{ \pm{ 1.92 } }$ & ${ 37.89}_{ \pm{ 1.91 } }$ \\%
\cmidrule{2-6}
& \multirow{2}{*}{0.2} & Unbiased & ${ 46.34 }_{ \pm{ 0.12 } }$ & ${ 46.94 }_{ \pm{ 1.12 } }$ & ${ 45.71 }_{ \pm{ 1.80 } }$ \\%
&& Collective & ${ 47.74 }_{ \pm{ 1.15 } }$ & ${ 43.55 }_{ \pm{ 1.06 } }$ & ${ 46.13 }_{ \pm{ 1.23 } }$ \\%
\midrule
\multirow{4}{*}{CCT} & \multirow{2}{*}{0.1} & Unbiased & ${ 45.01 }_{ \pm{ 1.07 } }$ & ${ 45.82 }_{ \pm{ 0.70 } }$ & ${ 45.34 }_{ \pm{ 1.05 } }$ \\%
&& Collective & ${ 49.93 }_{ \pm{ 1.77 } }$ & ${ 49.51 }_{ \pm{ 1.63 } }$ & ${ 50.65 }_{ \pm{ 1.62 } }$ \\%
\cmidrule{2-6}
& \multirow{2}{*}{0.2} & Unbiased & ${ 53.39 }_{ \pm{ 0.81 } }$ & ${ 53.29 }_{ \pm{ 1.46 } }$ & ${ 52.94 }_{ \pm{ 1.01 } }$ \\%
&& Collective & ${ 54.31 }_{ \pm{ 0.27 } }$ & ${ 53.74 }_{ \pm{ 1.06 } }$ & ${ 52.98 }_{ \pm{ 0.52 } }$ \\%
\bottomrule
\end{tabular} 
}
\end{sc}
\end{small}
\end{center}

\end{minipage}%
\hfill%
\begin{minipage}[b]{0.49\textwidth}
    \caption{%
    \topic[0mm]{Unbiased estimator based on Schatten norm.}
    While Schatten norm approximates the largest singular value better than the Frobenius norm, it does not allow the closed-form derivation of sampling strategies.
    In our experiments with numerical constrained optimization, we have not seen any significant improvement over the methods that are based on the Frobenius norm.
}
\label{tab:schatten}
\vspace{2.2ex}
\begin{center}
\begin{small}
\begin{sc}
\resizebox{\textwidth}{!}{
\begin{tabular}{lccccc}
\toprule
\multirow{2}{*}{Strategy}%
& CIFAR-10%
& \multicolumn{2}{c}{CIFAR-100}\\%
& ResNet%
& ResNet%
& CCT\\%
\midrule%
Unbiased & $80.57 \pm 1.39$ & $35.12 \pm 1.51$ & $45.01 \pm 1.07$\\
Collective & $82.59 \pm 0.27$ & $37.83 \pm 1.24$ &  $49.93 \pm 1.77$\\
Schatten ($p=4$) & $80.72 \pm 0.67$  &  $34.54 \pm 0.62$ & $45.75 \pm 0.44$\\
\bottomrule
\end{tabular}
}
\end{sc}
\end{small}
\end{center}

\end{minipage}%
\vspace{-4ex}
\end{table}

\topic{Schatten norm.}
Throughout this paper we used Frobenius norm to upper-bound the operator norm, see~\cref{eq:operator_and_frobenius_norm}.
While this choice allowed us to derive a closed-form solution which enables fast calculation on the server side, this upper bound is loose.
Better approximation can be achieved by using Schatten $p$-norm with $p > 2$.
Schatten norm  is defined as 
\begin{math}
    \left\lVert W \right\rVert_p = \left(\lambda_1^p + \dots + \lambda_N^p\right)^{\frac{1}{p}},
\end{math}
where $\lambda_1, \dots, \lambda_N$ are singular values of the matrix $W$.
Frobenius norm is an example of a Schatten $p$-norm with $p=2$.

For the \emph{unbiased} matrix estimators of the kind specified in~\cref{eq:general_estimator}, the average Schatten norm can be upper-bounded with the help of Jensen's inequality, since $\left(\cdot\right)^{\frac{1}{p}}$ is a concave function for $p > 1$
\begin{equation}\begin{aligned}
    \Eexpect_{p\left(z\right)}  \left\lVert W - \hat{W} \right\rVert_p
    &= \Eexpect_{p\left(z\right)} \left(\sum_{i=1}^N \lambda_i^p \left\lvert \frac{z_i}{\pi_i} - 1 \right\rvert^p\right)^{\frac{1}{p}} \\
    &\leq \left(\Eexpect_{p\left(z\right)} \sum_{i=1}^N \lambda_i^p \left\lvert \frac{z_i}{\pi_i} - 1 \right\rvert^p  \right)^{\frac{1}{p}} \\
    &= \left( \sum_{i=1}^N \lambda_i^p \left[1 - \pi_i + \pi_i \left(\pi_i^{-1} -1 \right)^p \right] \right)^{\frac{1}{p}}.
\end{aligned}\end{equation}
Unfortunately, we were not able to derive a closed-form solution for large values of $p$ in the same way as we proved~\cref{th:unbiased_estimator}.
Therefore, we opted for  numerical constrained optimization on the server side to find the optimal values of inclusion probabilities. 
As demonstrated in~\cref{tab:schatten}, for $p=4$ we did not find any significant improvement over our strategies presented in~\cref{sec:method}.
For larger $p$, the optimization procedure turned out unstable, and the results were unsatisfactory.

\topic{Computational resources.}
All our experiments were conducted with a single GPU and required not more than 10 Gb VRAM.
We used a simulation of the federated learning environment with all participating clients being trained in a sequence on the same GPU.

\section{Optimal Sampling Distribution}
In the main text we discussed estimators obtained from singular vector decomposition. However, we found that the obtained results are more general and apply to any orthogonal decomposition.
Therefore, in this section we provide proofs in a slightly generalized form.

More specifically, we consider a vector space equipped with an inner product and the corresponding induced norm.
Assume that some element of that space $Y$ may be represented as a sum of mutually orthogonal terms $Y = \sum_{i=1}^N X_i$, and the terms are enumerated in descending order w.r.t. their norms, i.e. $\lambda_i = \left\lVert X_i \right\rVert$, and $\lambda_1 \geq \dots \geq \lambda_N > 0$.
In the particular case of SVD, $Y$ is equal to the matrix $W$, $X_i = \lambda_i u_i v_i^T$, and the Frobenius inner product is considered.

\subsection{Unbiased Estimator}
\label{sec:derivation_unbiased_estimator}
Let  $\hat{Y}$ be an estimator of the element $Y$ of the following type,
\begin{align}
\hat{Y} 
= \hat{Y}\lft(\Isubsample\rgt) 
= \sum_{j=1}^n w_j\lft(\Isubsample\rgt) X_{\Isubsample_j}
= \sum_{i=1}^N z_i w_i\lft(\Isubsample\rgt) X_i,
\end{align}
where $\Isubsample \subset \left\{1,\dots,N\right\}$ and the size of $\Isubsample$ equals $n$,  $z_i$ is an indicator variable $z_i = z_i\lft(\Isubsample\rgt) =  \Iindicator\lft(i \in \Isubsample\rgt),$ and ${w_i} \in \mathbb{R}$ are scalar values, potentially depending on the whole subset $\Isubsample$.
Assuming there exists a distribution $p\lft(\Isubsample\rgt)$ over all subsets of size $n$, the expected value of the estimator can be calculated
\begin{align}
    \Eexpect_{p\lft(\Isubsample\rgt)} \hat{Y}\lft(\Isubsample\rgt)
    = \sum_{\Isubsample} p\lft(\Isubsample\rgt) \hat{Y}\lft(\Isubsample\rgt)
    = \sum_{i=1}^N X_i \sum_{\Isubsample} p\lft(\Isubsample\rgt) z_i w_i\lft(\Isubsample\rgt)
    = \sum_{i=1}^N X_i \Eexpect_{p\lft(\Isubsample\rgt)} \left[ z_i w_i\lft(\Isubsample\rgt) \right].
\end{align}
\subsubsection{Horvitz-Thompson Estimator}
The bias of the estimator above equals  $\left\lVert \Eexpect_{p\lft(\Isubsample\rgt)} \hat{Y}\lft(\Isubsample\rgt) - Y \right\rVert =  \left\lVert \sum_{i=1}^N X_i \Eexpect_{p\lft(\Isubsample\rgt)} \left[ z_i w_i\lft(\Isubsample\rgt) - 1 \right] \right\rVert$,
and if (i) all the $\left\{ X_i \right\}_i$ are mutually orthogonal and (ii) the estimator $\hat{Y}$ is unbiased,  the following constraint is obtained
\begin{align}
    \Eexpect_{p\left(\Isubsample\right)} \left[ z_i w_i\lft(\Isubsample\rgt) \right] = 1 \quad \textrm{for } i=1,\dots,N.
\end{align}
Taking into account that $z_i$ is a binary random variable, we can use that $z_i^2 = z_i$ and derive the following
\begin{align}
    1 
    &= \Eexpect \left[ z_i w_i\lft(\Isubsample\right) \rgt] 
    = \Eexpect \Eexpect \left[ z_i w_i \vert z_i \right] 
    = \Eexpect \left[ z_i \Eexpect \left[ w_i \vert z_i \right] \right]
    \\ &= \Pr\lft(z_i = 1\rgt) \Eexpect \left[ w_i \vert z_i = 1 \right] 
    = \pi_i \Eexpect \left[ w_i \vert z_i = 1 \right],
\end{align}
leading to $\Eexpect \left[ w_i \vert z_i = 1 \right] = \pi_i^{-1}.$

Due to the mutual orthogonality of the $\left\{ X_i \right\}_i$
\begin{align}
    \Eexpect_{p\left(\Isubsample\right)} \left\lVert Y - \hat{Y} \right\rVert^2
    =  \sum_{i=1}^N \lambda_i^2  \Eexpect_{p\left(\Isubsample\right)} \left[z_i w_i - 1 \right]^2.
\end{align}
Now it is possible to state the following optimization problem to find the unbiased estimator with the lowest mean squared error
\begin{equation}
\begin{aligned}
    \min\limits_{p\left(\Isubsample\right), \left\{w_i \right\} } \quad & \sum_{i=1}^N \lambda_i^2  \Eexpect_{p\left(\Isubsample\right)} \left[z_i w_i - 1 \right]^2
    \\ \textrm{s.t.} \quad & \Eexpect_{p\left(\Isubsample\right)} \left[ z_i w_i \right] = 1, \quad i=1,\dots,N,
    \\ & \sum_{i=1}^N \pi_i = n, \quad  0 \le \pi_i \le 1.
\end{aligned}
\end{equation}

The last constraint takes place since the size of the subset $\Isubsample$ is set equal to $n$ which implies $\sum_i z_i = n$, and due to the linear property of expectation $\Eexpect \sum_i z_i = \sum_i\Eexpect z_i = \sum_i \pi_i$.
Furthermore, the quadratic term under the expectation can be simplified thanks to the unbiasedness and lower-bounded with the help of Jensen's inequality
\begin{equation}
\begin{aligned}
    \Eexpect \left[z_i w_i - 1 \right]^2 
    & = \Eexpect \left[z_i w_i^2 \right] - 2  \Eexpect \left[z_i w_i \right] + 1
    \\ & = \Eexpect \left[z_i w_i^2 \right] - 1
    \\ & = \pi_i \Eexpect \left[w_i^2 \vert z_i = 1\right] - 1
    \\ & \ge \pi_i \left( \Eexpect \left[w_i \vert z_i = 1\right] \right)^2 - 1
    \\ & = \pi_i \cdot \left(\pi_i^{-1}\right)^2 - 1
    \\ & = \pi_i^{-1} - 1,
\end{aligned}
\end{equation}
and this lower bound can be achieved by setting $w_i$ to a non-random value $w_i = \pi_i^{-1}$.

\subsubsection{Inclusion Probabilities}
With the optimal values of auxiliary  multipliers known, the optimization problem can be rewritten as 
\begin{equation}
\begin{aligned}
    \min\limits_{\left\{ \pi_i\right\}} \quad & \sum_{i=1}^N \lambda_i^2 \left(\pi_i^{-1} - 1\right),
    \\ \textrm{s.t.} \quad & 0 \le \pi_i \le 1 \quad i=1,\dots,N,
    \\ & \sum_{i=1}^N \pi_i = n.
\end{aligned}
\end{equation}
This is a typical contrained optimization problem which can be solved with the help of Lagrangian function $L\lft(\left\{ \pi_i \right\}, \alpha, \beta, \left\{ \gamma_i \right\}, \left\{ \delta_i \right\} \rgt) 
= \alpha \sum_i \lambda_i^2 \pi_i^{-1} + \beta \left( \sum_i \pi_i - n \right) + \sum_i \gamma_i \left(-\pi_i \right) + \sum_i \delta_i \left(\pi_i - 1\right) $, leading to the following equations for $i=1,\dots,N$
\begin{equation}
\begin{aligned}
    &\frac{\partial L}{\partial \pi_i} = - \frac{\alpha \lambda_i^2}{\pi_i^2} + \beta - \gamma_i + \delta_i = 0,
    \\ &\sum_i \pi_i = n,
    \\ &\gamma_i \pi_i = 0,
    \\ &\delta_i \left(\pi_i - 1\right) = 0,
    \\ &\alpha, \, \gamma_i, \, \delta_i \ge 0,
    \\ &\alpha^2 + \beta^2 + \sum_i \gamma_i^2 + \sum_i \delta_i^2 > 0.
\end{aligned}
\end{equation}

\topic{Case $\alpha = 0$.}
In this case $\forall i \; \gamma_i - \delta_i = \beta = \text{const}$. 

If $\exists k \; \delta_k > 0 \Rightarrow \pi_k = 1 \Rightarrow \gamma_k = 0 \Rightarrow \beta = - \delta_k < 0 \Rightarrow \forall i \; \beta = \gamma_i - \delta_i < 0 \Rightarrow \forall i \; \gamma_i < \delta_i $.
But $\gamma_i$ and $\delta_i$ cannot be both greater than zero simultaneously since this would imply that $\pi_i = 0$ and $\pi_i = 1$ at the same time. 
Therefore, the only way for the strict inequality to hold true is to set $\forall i \; \gamma_i = 0,\; \delta_i > 0 \Rightarrow \forall i \; \pi_i = 1.$ 
The last equality leads to contradiction in case $n < N$, which is our main case of interest.

Therefore, $\forall i \; \delta_i = 0 \Rightarrow 0 \leq \gamma_i = \beta = \text{const}.$ 
If $\beta = 0$, this breaks the constraint of existence of at least one non-zero coefficient. 
Therefore, $\forall i \; \gamma_i = \beta > 0 \Rightarrow \forall i \; \pi_i = 0$, and this leads to contradiction given that $n > 0$.

\topic{Case $\alpha = 1$.}
W.l.o.g. we can assume than $\lambda_1 \geq \lambda_2 \geq \dots \geq \lambda_N > 0$.
From our equations, $\forall i \; \pi_i^2 = \dfrac{\lambda_i^2}{\beta + \delta_i - \gamma_i}$.

If $\exists l \; \gamma_l > 0 \Rightarrow \pi_l = 0 \Rightarrow \lambda_l = 0$, and this contradicts the assumptions above.
Therefore, $\forall i \; \gamma_i = 0$, and $\forall i \; \pi_i^2 = \dfrac{\lambda_i^2}{\beta + \delta_i}$.

Now let us assume that $\exists k \; \delta_k > 0 \Rightarrow \pi_k = 1 \Rightarrow \beta = \lambda_k^2 - \delta_k \Rightarrow \forall j \neq k \; 1 \geq \pi_j^2 = \dfrac{\lambda_j^2}{\lambda_k^2 - \delta_k + \delta_j} \Rightarrow \forall j \neq k \; \lambda_j^2 - \lambda_k^2 \leq \delta_j - \delta_k$. Therefore, if $j < k$, then $\lambda_j > \lambda_k > 0$, and consequently $\delta_j > \delta_k > 0$ which leads to $\pi_j = 1$. Summarizing, if $\pi_k = 1, $ then $\pi_1 = \dots = \pi_k = 1$.

Based on the previous observation, we define 
\begin{equation}
    t = \min\limits_i \left\{ 1 \leq i \leq N: \; \pi_i < 1 \right\} - 1,
\end{equation}
which means that $\pi_1 = \dots \pi_t = 1$, and $\pi_{t+1}, \dots, \pi_N < 1 \Rightarrow \delta_l = 0$ for $l=t+1,\dots,N, $ and $\pi_l^2 = \frac{\lambda_l^2}{\beta}$.
After that, the sum of all the inclusion probabilities equals $\sum_{i=1}^N \pi_i = t + \sum_{l=t+1}^N \frac{\lambda_l}{\sqrt{\beta}} = n \Rightarrow \sqrt{\beta} = \frac{\sum_{l=t+1}^{N} \lambda_l}{n - t} \Rightarrow \pi_l = \left(n - t\right) \frac{\lambda_l}{\sum_{j=t+1}^N \lambda_j}.$
Since all the probabilities $\pi_l$ should be less than 1 for $l > t$, there is a natural constraint $\lambda_{t+1} < \frac{\sum_{j=t+1}^N \lambda_j}{n-t}.$

Thus, the search of the optimal set of inclusion probabilities can be done in time not worse than $O\lft(Nn\rgt)$. 
See~\cref{alg:unbiased} for details. 
In practice, we found that the computation speed can be significantly increased with vectorization.
Note that the solution always exists in~\cref{alg:unbiased} because for $t=n-1$ the condition $\lambda_{t+1} < \frac{\Lambda_t}{n-t}$ holds true as obviously $\lambda_n < \sum_{l=n}^N \lambda_l$.
\begin{remark}
The optimal set of probabilities $\left(\pi_1,\dots,\pi_N\right)$ specified by~\cref{eq:unbiased_pi_value} is \emph{balanced}~\citep{tille_sampling_2006} w.r.t. the variables $\left(\lambda_1,\dots,\lambda_N\right)$, i.e. for any subset of indices $\Isubsample$ sampled according to the optimal distribution $p\left(\Isubsample\right)$ the following equality holds
\begin{equation*}
    \sum\limits_{j=1}^n \dfrac{\lambda_{\Isubsample_j}}{\pi_{\Isubsample_j}} 
    = \sum\limits_{i=1}^N \lambda_i.
\end{equation*}
Therefore, in the case of spectral model sharding, before the local training starts, the sum of singular values of the estimator $\hat{W}$ is equal to the sum of singular values of the full matrix $W$.
\end{remark}

\subsection{Collective Estimator}
\label{sec:derivation_collective_estimator}
Now assume a slightly modified version of the setting described above:
There exist $C > 1$ clients each of which samples an i.i.d. estimator $\hat{Y}^{\left(c\right)}, \; 1 \leq c \leq C,$ of the target value $Y$ using the same distribution $p\left(\Isubsample\right)$.
In this section we consider a simpler case of weighting where coefficients $w_i$ do not depend on the sampled subset of indices anymore, i.e. $\hat{Y}^{\left(c\right)} = \sum_{i=1}^N z_i^{\left(c\right)} w_i X_i$.
Imagine another participant, a `server', who tries to reconstruct the target value $Y$ by averaging the clients' estimators, $\bar{Y} = \frac{1}{C} \sum_{c=1}^C \hat{Y}^{\left(c\right)} = \frac{1}{C} \sum_{i=1}^N w_i X_i \sum_{c=1}^C z_i^{\left(c\right)}$.
We aim to minimize the average squared error between the server's estimator $\bar{Y}$ and ground-true value $Y$.
As before, due to the mutual orthogonality of $\left\{X_i\right\}$, this error can be expressed solely in terms of the magnitudes,
\begin{equation}
\begin{aligned}
    \Eexpect \left\lVert Y - \bar{Y} \right\rVert^2 &= \Eexpect \sum_{i=1}^N \left( \lambda_i - \lambda_i \frac{w_i}{C} \sum_{c=1}^C z_i^{\left(c\right)} \right)^2
    \\ &= \sum_{i=1}^N \lambda_i^2 \Eexpect \left(1 - \frac{w_i}{C} \sum_{c=1}^C z_i^{\left(c\right)} \right)^2
    \\ &= \sum_{i=1}^N \lambda_i^2 \left( 1 - \frac{2 w_i}{C} \sum_{c=1}^C \Eexpect z_i^{\left(c\right)} + \frac{w_i^2}{C^2} \left[\sum_{c=1}^C \Eexpect z_i^{\left(c\right)} + 2 \sum_{c' < c} \Eexpect \left\{ z_i^{\left(c\right)} z_i^{\left(c'\right)} \right\} \right] \right)
    \\ &= \sum_{i=1}^N \lambda_i^2 \left( 1 - \frac{2 w_i}{C} C \pi_i + \frac{w_i^2}{C^2} \left[ C\pi_i + 2 \frac{C \left(C-1\right)}{2} \pi_i^2 \right] \right)
    \\ &= \sum_{i=1}^N \lambda_i^2 \left( 1 - 2 w_i \pi_i + \frac{w_i^2}{C} \pi_i + \frac{w_i^2 \left(C-1\right)}{C} \pi_i^2 \right)
    \\ &= \sum_{i=1}^N \lambda_i^2 + \sum_{i=1}^N \lambda_i^2 w_i \pi_i \left(-2 + \frac{w_i}{C} + \frac{w_i \pi_i \left(C-1\right)}{C} \right).
\end{aligned}
\end{equation}
Now we can formulate a new constrained optimization problem,
\begin{equation}
\begin{aligned}
    \min\limits_{\left\{w_i, \pi_i \right\}_i } \quad & \sum_{i=1}^N \lambda_i^2 w_i \pi_i  \left(-2 + \frac{w_i}{C} + \frac{w_i \pi_i \left(C-1\right)}{C} \right),
    \\ \textrm{s.t.} \quad & \sum_{i=1}^N \pi_i = n,
    \\   & 0 \le \pi_i \le 1, \; i=1,\dots,N.
\end{aligned}
\end{equation}
The corresponding Lagrangian function equals 
\begin{align}
\begin{split}
    L\lft( \left\{\pi_i\right\}, \left\{w_i\right\}, \alpha, \beta, \left\{\gamma_i\right\}, \left\{\delta_i\right\} \rgt) &= \alpha \sum_i \lambda_i^2 w_i \pi_i  \left(-2 + \frac{w_i}{C} + \frac{w_i \pi_i \left(C-1\right)}{C} \right)
    \\ &+ \beta \left(\sum_i \pi_i - n \right) 
    \\ &+ \sum_i \gamma_i \left(-\pi_i\right) + \sum_i \delta_i \left(\pi_i - 1\right)
\end{split}
\end{align}

and leads to the following conditions for all $i = 1,\dots,N$
\begin{equation}
\begin{aligned}
    & \frac{\partial L}{\partial \pi_i} = \alpha \lambda_i^2 w_i \left(-2 + \frac{w_i}{C} + \frac{2 w_i \pi_i \left(C-1\right)}{C} \right) + \beta - \gamma_i + \delta_i = 0,
    \\ & \frac{\partial L}{\partial w_i} = \alpha \lambda_i^2 \pi_i \left( -2 + \frac{2 w_i}{C} + \frac{2 w_i \pi_i \left(C-1\right)}{C} \right) = 0,
    \\ &\sum_i \pi_i = n,
    \\ &\gamma_i \pi_i = 0,
    \\ &\delta_i \left(\pi_i - 1\right) = 0,
    \\ &\alpha, \, \gamma_i, \, \delta_i \ge 0,
    \\ &\alpha^2 + \beta^2 + \sum_i \gamma_i^2 + \sum_i \delta_i^2 > 0.
\end{aligned}
\end{equation}
\paragraph{Case $\alpha=0$.} 
This case does not differ from the one of the setting of the unbiased estimator.
\paragraph{Case $\alpha=1$.}
Since $\sum_i \pi_i = n$, there exists such $k$ that the corresponding inclusion probability $\pi_k$ is greater than zero, $\exists k \; \pi_k > 0 \Rightarrow -2 + \frac{2 w_k}{C} + \frac{2 w_k \pi_k \left(C-1\right)}{C} = 0 \Rightarrow w_k = \frac{C}{1 + \pi_k \left(C-1\right)}$.
As we look for $0 < \pi_k \leq 1$, this implies $ 1 \leq w_k < C$.
Also, since $\pi_k > 0$, then $\gamma_k = 0 \Rightarrow \lambda_k^2 w_k \left(-2 + \frac{w_k}{C} + \frac{2 w_k \pi_k \left(C-1\right)}{C} \right) + \beta  + \delta_k = 0 \Rightarrow \lambda_k^2 w_k \left( - \frac{w_k}{C} \right) + \beta  + \delta_k = 0 \Rightarrow \beta = -\delta_k + \frac{\lambda_k^2 w_k^2}{C}$.

If $\pi_k < 1$, then $\delta_k = 0 \Rightarrow w_k = \frac{\sqrt{C \beta}}{\lambda_k} \Rightarrow \pi_k = \frac{1}{C-1} \left[\lambda_k \sqrt{\frac{C}{\beta}} -1 \right]$.
Furthermore, as in this case $1 < w_k < C$, then $\frac{\lambda_k^2}{C} < \beta < \lambda_k^2 C \Rightarrow \max\limits_{k: 0 < \pi_k < 1} \frac{\lambda_k^2}{C} < \beta < \min\limits_{k: 0 < \pi_k < 1} \lambda_k^2 C.$

Otherwise, if $\pi_k = 1$, then $w_k = 1$, and $\beta = -\delta_k + \frac{\lambda_k^2}{C}$, and since $\delta_k \geq 0$, the following inequality holds true, $\beta \leq \min\limits_{k:\, \pi_k = 1} \frac{\lambda_k^2}{C}.$

Combining these observations together, we obtain $\max\limits_{k: 0 < \pi_k < 1} \frac{\lambda_k^2}{C} < \beta \leq \min\limits_{k:\, \pi_k = 1} \frac{\lambda_k^2}{C}$.
This inequality can hold true only if, similarly to the case of unbiased estimator, several largest magnitudes  correspond to inclusion probabilities equal to 1, and all the rest inclusion probabilities are strictly less than 1.

Revisiting the optimization criterion $\sum_{i=1}^N F_i = \sum_{i=1}^N \lambda_i^2 w_i \pi_i  \left(-2 + \frac{w_i}{C} + \frac{w_i \pi_i \left(C-1\right)}{C} \right)$, we can now consider each separate term of the sum.
Starting with the product of the inclusion probability and the weighting coefficient
\begin{equation}
    \pi_i w_i = \begin{cases}
        1  & \textrm{if} \; \pi_i = 1,\\
        \frac{1}{C-1} \left[\lambda_i \sqrt{\frac{C}{\beta}} -1 \right] \cdot \frac{\sqrt{C \beta}}{\lambda_i} = \frac{C}{C-1} - \frac{\sqrt{C\beta}}{\lambda_i \left(C-1\right)}  & \textrm{if} \; 0 < \pi_i < 1, \\
        0 & \textrm{if} \; \pi_i = 0,
    \end{cases}
\end{equation}
we can write down the equation of each term
\begin{equation}
    F_i = \begin{cases}
        - \lambda_i^2
        & \textrm{if} \; \pi_i = 1,\\
        \lambda_i^2 \left(\frac{C}{C-1} - \frac{\sqrt{C\beta}}{\lambda_i \left(C-1\right)} \right) \left(-2 + \frac{\sqrt{\beta}}{\lambda_i \sqrt{C}} + 1 - \frac{\sqrt{\beta}}{\lambda_i \sqrt{C}} \right)  = - \frac{C}{C-1} \lambda_i  \left(\lambda_i - \sqrt{\frac{\beta}{C}} \right)
        & \textrm{if} \; 0 < \pi_i < 1, \\
        0  & \textrm{if} \; \pi_i = 0.
    \end{cases}
\end{equation}

As was derived above, in case $0 < \pi_i < 1$ we have $\beta < \lambda_i^2 C \Rightarrow \lambda_i > \sqrt{\frac{\beta}{C}}$. 
Therefore, $F_i$ is a monotonically decreasing function of $\lambda_i$ with negative values if $0 < \pi_i < 1$.
Since the goal is to minimize $\sum_i F_i$, this implies that it is beneficial to assign non-zero probabilities to greater values of $\lambda_i$.
Therefore, the optimal solution $\left(\pi_1,\dots,\pi_N\right)$ has the following form for some $t \geq 0, \, u \geq 0,$
\begin{itemize}
    \item $\pi_1 = \dots = \pi_t = 1$,
    \item $1 > \pi_{t+1} \geq \dots \geq \pi_{t+u} > 0$,
    \item $\pi_{t+u+1} = \dots = \pi_N = 0$.
\end{itemize}

Since the sum of all inclusion probabilities should be equal to $n$,
\begin{equation*}
    \sum_i \pi_i = t + \frac{\sqrt{C}}{\sqrt{\beta}\left(C-1\right)} \sum_{i=t+1}^{t+u} \lambda_i - \frac{u}{C-1} = n
    \Rightarrow \sqrt{\beta} = \frac{\sqrt{C}\sum_{i=t+1}^{t+u} \lambda_i}{\left(n-t\right)\left(C-1\right) + u}
\end{equation*}
The search for the best solution takes no longer then $O\lft(N^2 n\rgt)$ time, see~\cref{alg:collective} for details.
As before, the calculations may be done in the vectorized form.

\begin{remark}
\label{remark:limit_of_collective_estimator}
Interestingly, for large values of $C$, the optimal values given by~\cref{eq:collective_omega_pi_value} are approaching \emph{some} unbiased Horvitz-Thompson estimator of $W$, in accordance to the probability theory's law of large numbers. 
However, this estimator is not guaranteed to have the properties that are provided by~\cref{th:unbiased_estimator}.
Therefore, for large values of $C$ it is more beneficial to use the Unbiased estimator instead of the Collective one. 
\end{remark}

\begin{remark}
\label{remark:unbiased_collective_estimator}
It is straightforward to show that if the constraint of unbiasedness is applied to the collective estimator, i.e. $\mathbb{E} \bar{W} = W$, then it necessarily implies that $w_i = \pi_i^{-1}$, just as in the case of the unbiased estimator. 
Therefore, the unbiased collective estimator is the same as the unbiased estimator. 
Since \cref{th:collective_estimator} provides  the estimator with the least Frobenius discrepancy among all the collective estimators, the achieved  least discrepancy cannot be greater than the least possible discrepancy among unbiased collective estimators. 
This concludes the proof that the discrepancy of the collective estimator is not greater than the discrepancy of the unbiased estimator. 
However, as \cref{remark:limit_of_collective_estimator} implies, in the limit case $C \to \infty$ these discrepancies become equal.
\end{remark}

\section{Wallenius' Distribution and \texttt{numpy.random.choice}}
\label{sec:wallenius}
The official documentation\footnote{\url{https://github.com/numpy/numpy/blob/0a4b2b83eaf3479f352a7fe5a4b378a169ab48f0/numpy/random/mtrand.pyx\#L856-L954}} of NumPy~\citep{harris2020array} currently does not contain a description of how exactly sampling \emph{without} replacement with unequal probabilities is done if the function \texttt{numpy.random.choice} is employed.
Nevertheless, the source code\footnote{\url{https://github.com/numpy/numpy/blob/0a4b2b83eaf3479f352a7fe5a4b378a169ab48f0/numpy/random/mtrand.pyx\#L1026-L1045}} suggests that sampling is conducted in the following way:
\begin{enumerate}
    \item The remaining number of items is sampled \emph{with} replacement from the multinomial distribution with probabilities proportional to the current chances (in the beginning, those chances are equal to the probabilities provided by the user).
    \item The unique items from the sampled subset are added to the output of the function, and the chances of those items are set equal to zero.
    \item The steps above are repeated with the updated chances until the requested number of items is selected.
\end{enumerate}

The analysis of the statistical properties of the output sample seems infeasible if the probabilities given by the user are far from being uniform.
However, the case of non-uniform chances is exactly what we are interested in when working with singular values of weight matrices.

Interestingly, this algorithm from NumPy resembles the description of multivariate Wallenius’ noncentral hypergeometric distribution~\citep{Wallenius_1963,2e0c8582-b4c5-3cc1-897d-1ca8cebe1269,fog_calculation_2008,fog_sampling_2008} which is often defined through the urn problem.
In detail, if one draws items one by one, instead of trying to get the maximum possible number at each step, this would provide a sample from a special case of Wallenius' law.

Clearly, the two procedures are different significantly enough to produce different distributions.
Despite this, we found that in our simulations the mean vector of the Wallenius' distribution was quite a good approximation for  the empirical inclusion probabilities of the output of NumPy function.

\section{Licenses for Assets}
\begin{enumerate}
    \item CIFAR-10~\citep{Krizhevsky2009LearningML}: unknown
    \item CIFAR-100~\citep{Krizhevsky2009LearningML}: unknown
    \item TinyImagenet~\citep{tinyimagenet}: unknown
    \item Shakespeare~\citep{pmlr-v54-mcmahan17a}: Apache License, Version 2.0
\end{enumerate}

\section*{Impact Statement}
This paper presents work whose goal is to advance the field of federated learning. 
The potential societal consequences include the increased abilities of training more powerful models on a federation of user devices in privacy-friendly manner.
At the moment, we do not see any direct negative consequences of the conducted research.

\clearpage\newpage
\begin{minipage}[t]{0.49\textwidth}
    \begin{algorithm}[H]
\renewcommand{\algorithmiccomment}[1]{\hfill{\# #1}}
\caption{Inclusion Probabilities for the Unbiased Strategy}\label{alg:unbiased}
\begin{algorithmic}[1]
\REQUIRE{
    Magnitudes $\lambda_1  \geq \lambda_2 \geq \dots \geq \lambda_N > 0$, 
    number of samples $0 < n < N$.
}
\ENSURE{Optimal inclusion probabilities $\left(\pi_1^*, \dots, \pi_N^*\right)$}
\STATE $E^* \leftarrow \infty$\COMMENT{Best criterion value}
\FOR{$t = 0$ to $n-1$}
    \STATE $\Lambda_t \leftarrow \sum_{l=t+1}^N \lambda_l$
    \IF{$\lambda_{t+1} < \frac{\Lambda_t}{n-t}$}
        \FOR{$l=1$ to $t$} 
            \STATE $\pi_l \leftarrow 1$
        \ENDFOR
        \FOR{$l=t+1$ to $N$} 
            \STATE $\pi_l \leftarrow \frac{\left(n - t\right) \lambda_l}{\Lambda_t}$
        \ENDFOR
        \STATE $E = \sum_{i=1}^N \lambda_i^2 \left(\pi_i^{-1} - 1\right)$
        \IF{$E < E^*$} 
            \STATE $E^* \leftarrow E$
            \STATE $\left(\pi_1^*, \dots, \pi_N^*\right) \leftarrow \left(\pi_1, \dots, \pi_N\right)$
        \ENDIF
    \ENDIF
\ENDFOR
\end{algorithmic}
\end{algorithm}

\end{minipage}%
\hfill%
\begin{minipage}[t]{0.49\textwidth}
    \begin{algorithm}[H]
\renewcommand{\algorithmiccomment}[1]{\\{\# #1}}
\caption{Inclusion Probabilities and Auxiliary Multipliers for the Collective Strategy}\label{alg:collective}
\begin{algorithmic}[1]
\REQUIRE{
    Magnitudes $\lambda_1  \geq \lambda_2 \geq \dots \geq \lambda_N > 0$, 
    number of samples $0 < n < N$, number of clients $C$.
}
\ENSURE{
    Optimal inclusion probabilities $\left(\pi_1^*, \dots, \pi_N^*\right),$
    optimal auxiliary multipliers $\left(w_1^*, \dots, w_N^*\right).$
}
\COMMENT{Consider top-$n$ case first, i.e. $t=n,\; u=0$}
\FOR{$i=1,\dots,n$}
    \STATE $\pi_i \leftarrow 1$
    \STATE $w_i \leftarrow 1$
\ENDFOR
\FOR{$i=n+1,\dots,N$}
    \STATE $\pi_i \leftarrow 0$
    \STATE $w_i \leftarrow 0$
\ENDFOR
\STATE $E^* \leftarrow - \sum_{i=1}^n \lambda_i^2$
\STATE $\left(\pi_1^*,\dots,\pi_N^*\right) \leftarrow \left(\pi_1,\dots,\pi_N\right)$
\STATE $\left(w_1^*,\dots,w_N^*\right) \leftarrow \left(w_1,\dots,w_N\right)$
\COMMENT{Other cases}
\FOR{$t = 0,\dots,n-1$} 
    \FOR{$i=1,\dots,t$} 
        \STATE $\pi_i \leftarrow 1$
        \STATE $w_i \leftarrow 1$
    \ENDFOR
    \STATE $E_t \leftarrow - \sum_{i=1}^t \lambda_i^2$
    \FOR{$u=1,\dots,N-t$} 
        \STATE $\sqrt{\beta} \leftarrow \frac{\sqrt{C}\sum_{i=t+1}^{t+u} \lambda_i}{\left(n-t\right)\left(C-1\right) + u} $
        \IF{$\frac{\lambda_{t+1}}{\sqrt{C}} < \sqrt{\beta} \leq \frac{\lambda_t}{\sqrt{C}} \And \sqrt{\beta} < \lambda_{t+u}\sqrt{C} $} 
            \FOR{$i=t+1,\dots,t+u$} 
                \STATE $\pi_i \leftarrow \frac{1}{C-1} \left[\frac{\lambda_i \sqrt{C}}{\sqrt{\beta}}  -1 \right]$
                \STATE $w_i \leftarrow \frac{\sqrt{C}\sqrt{\beta}}{\lambda_i}$
            \ENDFOR
            \FOR{$i=t+u+1,\dots,N$} 
                \STATE $\pi_i \leftarrow 0$
                \STATE $w_i \leftarrow 0$
            \ENDFOR
            \STATE $E \leftarrow E_t - \frac{C}{C-1}\sum_{i=t+1}^{t+u} \lambda_i \left(\lambda_i - \frac{\sqrt{\beta}}{\sqrt{C}} \right)$
            \IF{$E < E^*$} 
                \STATE $E^* \leftarrow E$
                \STATE $\left(\pi_1^*, \dots, \pi_N^*\right) \leftarrow \left(\pi_1, \dots, \pi_N\right)$
                \STATE $\left(w_1^*, \dots, w_N^*\right) \leftarrow \left(w_1, \dots, w_N\right)$
            \ENDIF
        \ENDIF
    \ENDFOR
\ENDFOR
\end{algorithmic}
\end{algorithm}

\end{minipage}%
\vspace{-3ex}

% \clearpage
% \newpage
% \input{neurips_2024/neurips_mandatory_checklist}

\end{document}